\pdfoutput=1
\documentclass[11pt,a4paper]{article}
\usepackage[T1]{fontenc}
\usepackage[utf8]{inputenc}
\usepackage{lmodern,microtype}
\usepackage[margin=27mm]{geometry}
\usepackage{amsmath,amssymb,amsthm,mathtools}
\usepackage{enumitem,booktabs,float}
\usepackage[dvipsnames]{xcolor}
\usepackage{tikz}
\usetikzlibrary{arrows.meta,positioning}
\usepackage[numbers,sort&compress]{natbib}
\usepackage[colorlinks=true,linkcolor=MidnightBlue,citecolor=MidnightBlue,urlcolor=MidnightBlue]{hyperref}
\usepackage{url}
\hypersetup{pdftitle={Structure-Aware Message Passing on Petri Nets through Directed Walks and Incidence Diffusion}}
\newcommand{\R}{\mathbb{R}}
\newcommand{\N}{\mathbb{N}}
\newcommand{\ones}{\mathbf{1}}
\newcommand{\diag}{\operatorname{diag}}
\newcommand{\norm}[1]{\left\lVert#1\right\rVert}
\newtheorem{proposition}{Proposition}
\setlist{nosep,leftmargin=*}
\providecommand{\pmsd}[2]{#1$\pm$#2}

\title{Structure-Aware Message Passing on Petri Nets through Directed Walks and Incidence Diffusion}
\author{Geri Skenderi\\HASH\\g@hash.ai}
\date{October 21, 2026}

\begin{document}
\maketitle
\begin{abstract}
A common application of the process mining discipline is to connect observed event data with Petri nets, which are bipartite, directed graphical process models with rich formal underpinnings. Recent work learns vector representations of Petri nets to retrieve process models and relate them to event logs, but leaves implicit two key aspects hidden within the net's structure: directed cycles (process loops) and invariants recorded by the net's incidence matrix. We expose both of these to message-passing graph neural networks, through a positional encoding built from return probabilities of directed place-transition walks (PNRWPE), and through same-type messages gated by diffusion kernels of the incidence matrix (DA-PNGNN). Our constructions raise macro balanced accuracy in predicting the generation factors of the 96 Process Discovery Contest 2023 models from 0.751 to 0.796 with PNRWPE, which nearly doubles accuracy on process loops, and to 0.830 with DA-PNGNN, which, without retuning, also improves net-to-tree retrieval top-1 accuracy on the self-supervised ProcRosetta benchmark by 4.1 to 5.9 points.
\end{abstract}

\section{Introduction}
Petri nets are a commonly used formal model of discrete-event dynamical systems supporting the analysis of concurrent computation~\cite{murata}. This has made them popular in the process-mining literature, which constructs, checks, and simulates them routinely~\cite{vanderaalstProcessMining3602022}. Other process-mining tasks, such as retrieving similar models from a collection or relating a model to the event logs of the same process, can be approached by learning vector representations of the nets themselves. Existing work has learned such representations for retrieval and classification~\cite{colonna}, and a recent study~\cite{berti} has built a multimodal neural model that aligns the process net with the process tree that generated it~\cite{vanderaalstProcessMining3602022} and the associated event logs.

A Petri net is, stripped to its bare bones, a directed bipartite graph whose two node partitions (or types) have semantic roles, and whose incidence matrix carries an algebra of conservation that Petri net theory has studied for decades~\cite{murata}. Message passing is the de facto paradigm for learning on graphs~\cite{gilmer2020message}. It relies on repeated local aggregation of learned node features along the edges of the graph. It is therefore a natural tool for learning representations of a Petri net, and simple adaptations of message-passing networks have been shown to be beneficial~\cite{berti}. We argue that current modeling approaches are not making complete use of specific structural inductive biases that a Petri net structure can provide. Concrete examples are notions of where a node sits relative to directed cycles, and whether paths between node partitions that lead to potential token conservation exist. This raises the central question that motivates our work: can the structure that Petri net theory makes explicit improve learned representations of process models?

In this work, we answer this question by proposing two constructions that extract higher-order structural information from a Petri net for use in message passing. The first concerns positional encodings (PEs). Since graphs are structured domains, a sense of position lets a node encode its graph-based distance to other nodes, or its membership in substructures that matter for a downstream task~\cite{dwivedi2020generalization}. A positional encoding implements this as a deterministic vector describing a node's structural role, which can be fed as input to any neural model over the graph, and a popular instantiation is the random-walk positional encoding (RWPE)~\cite{dwivedi}. We extend this encoding to Petri nets, obtaining the Petri net random-walk positional encoding (PNRWPE), by building it on the return probabilities of an alternating place-transition walk, a construction similar in spirit to the one used by transformers for directed graphs~\cite{geisler2023transformers}. Since directed returns collapse on acyclic nets, we also derive a variant with uniform teleportation between opposite node types, which acts as a directed, bipartite PageRank~\cite{bianchini2005inside}. Both encodings are equivariant to node reordering and can be cheaply computed for a new net.

The second construction is a modification of the message-passing procedure itself. The incidence matrix of a net records how each transition consumes and produces tokens, which lets us define a diffusion operator whose steady state lies in the place- or transition-invariant subspaces of the net~\cite{murata}. Its powers yield pairwise kernels between nodes of the same type that suppress violations of linear token balance, and we use these kernels to learn same-type messages in addition to the ordinary ones. The resulting higher-order message passing produces representations that are informed about the state and structural conservation properties of the Petri net, an inductive bias unique to this graphical model. This construction is compatible with any message-passing architecture and can be used together with PNRWPE. We call the resulting architecture the diffusion-augmented Petri net graph neural network (DA-PNGNN).

We derive both constructions from first principles and evaluate them on two benchmarks from recent work: supervised prediction of the generation factors of the Process Discovery Contest (PDC) 2023 process models~\cite{colonna}, and the self-supervised multimodal objective of ProcRosetta~\cite{berti}. Both constructions provide improvements and the diffusion kernels do so on both benchmarks: on PDC 2023 they raise macro balanced accuracy from 0.751 to 0.832 when used together, while on ProcRosetta DA-PNGNN improves net-to-tree retrieval by up to 5.9 top-1 accuracy points. We run several control experiments that isolate the contribution of each component, study the effect of the receptive field during message passing, and also report the computational cost of the main architectural variants. The rest of the paper is organized as follows: Section~\ref{sec:prelim} provides some background and fixes notation, Sections~\ref{sec:rwpe} and~\ref{sec:da} derive the two constructions, Section~\ref{sec:experiments} reports the complete experimental protocol and results, and Section~\ref{sec:conclusion} discusses what the evidence supports and our final conclusions.

\section{Preliminaries}\label{sec:prelim}
This section provides a brief account of the two ingredients that our constructions build on. Section~\ref{sec:gnn} summarizes message passing and random-walk positional encodings on ordinary graphs, and Section~\ref{subsec:petri-structure} introduces the matrices that describe a Petri net. We denote matrices by upper-case letters ($A$, $\Phi$), vectors and scalars by lower-case, with the role of each symbol clear from the context. For a matrix $A$, $A_{ij}$ is its $(i,j)$-th entry, $(A^k)_{ij}$ the corresponding entry of its $k$-th power, $A^\top$ its transpose, and $\diag(A)$ the vector of its diagonal entries. Conversely, $\diag(v)$ is the diagonal matrix built from a vector $v$. The symbol $\ones$ denotes the all-ones vector or matrix and $I$ the identity matrix. Their shape is given as a subscript and is otherwise to be understood from the context. $\N_0$ denotes the nonnegative integers, $\R$ the reals, $\ker$ the null space of a matrix, and $\det$ its determinant. Finally, $\norm{\cdot}_2$ denotes the Euclidean norm for vectors and the spectral norm for matrices.

\subsection{Graph neural networks}\label{sec:gnn}
Graph neural networks (GNNs) are a subclass of neural networks that learn representations of graph data by locally aggregating information~\cite{scarselli2008graph,gilmer2020message}. The main goal of the architecture is to implement symmetries natural to graphs~\cite{bronstein2017geometric}. Consider, for simplicity, an unweighted and undirected graph $G$ with $n$ nodes, represented by a symmetric binary adjacency matrix $A \in \{0,1\}^{n \times n}$. This setting can be easily extended to deal with more general connectivity structures, as we will do in this work. By associating a node signal $X \in \mathbb{R}^{n \times m}$ with the graph, we can describe a simple GNN as a learnable convolution of the node signal with $A$ as the shift operator. A generalization of this concept can be obtained by considering the message-passing framework:
\begin{equation}\label{eq:msg-passing}
    x_i^{(\ell)} = \theta^{(\ell)} \left( x_i^{(\ell-1)}, \bigoplus_{j \in \mathcal{N}(i)} \, \phi^{(\ell)}\left(x_i^{(\ell-1)}, x_j^{(\ell-1)},e_{ji}\right) \right),
\end{equation}
where $x_i^{(\ell)}$ denotes the features of node $i$ at layer $\ell$, $e_{ji}$ the (optional) edge features of the connection between nodes $j$ and $i$, $\mathcal{N}(\cdot)$ the set of (1-hop) neighbor nodes, $\bigoplus$ a differentiable, permutation-invariant function (e.g., sum, mean), and $\phi, \theta$ denote differentiable and (optionally) nonlinear functions such as multilayer perceptrons. Note that $\phi$ is of crucial importance as it allows us to transform the node features before aggregation, thereby acting as a map that determines what the aggregate can record.

\paragraph{Positional encodings and random walks.} Positional encodings complement the node signal $X$ with information
about each node's relative position (or structural role) in the graph. We remind the reader that a canonical coordinate system cannot be fixed on a graph and therefore the naive notion of an absolute, global positional encoding cannot be employed. In a certain sense, a graph positional encoding can be thought of as attempting to provide information about the relative positioning of a node with respect to others. In this work, we focus on random-walk positional encodings~\cite{dwivedi}. For the undirected graph $G$, let $d_i = \sum_{j=1}^{n} A_{ij}$ denote the degree of node $i$. The random-walk transition matrix $Q \in [0,1]^{n \times n}$ is defined by $Q_{ij}=A_{ij}/d_i$ whenever $d_i>0$. Given a maximum walk length $K$, the RWPE of node $i$ is
\begin{equation}\label{eq:rwpe-undirected}
    r_i =
    \left[
        Q_{ii},
        (Q^2)_{ii},
        \ldots,
        (Q^K)_{ii}
    \right]
    \in [0,1]^K.
\end{equation}
The coordinate $(Q^k)_{ii}$ is the probability that a random walk starting at node $i$ returns to that node after $k$ steps.
The relative position of the node is therefore captured through the return patterns at different walk lengths: high values throughout the components of this position vector imply the node lives in a dense cluster within the graph. In this scenario, the positional encoding will be similar for nodes in that cluster. The encoding can be incorporated directly into features fed into a GNN, for example through concatenation with $x^{(0)}_i$ in Equation~\eqref{eq:msg-passing}. 

\subsection{Petri net structure}\label{subsec:petri-structure}
We consider finite Petri nets with place set $P$, transition set $T$, and nonnegative integer arc multiplicities $w$. Let $n_P=|P|$, $n_T=|T|$, and $n=n_P+n_T$. We assume $n_P,n_T>0$ and allow for disconnected components or isolated nodes. Inhibitor, timed, and colored-net semantics require additional structure and lie outside the scope of this study. The matrices $E^-,E^+\in\N_0^{n_P\times n_T}$, with entries $E^-_{ij}=w(p_i,t_j)$ and $E^+_{ij}=w(t_j,p_i)$, record token consumption ($P \to T$) and production ($T \to P$), respectively. In both cases we index using places as rows and transitions as columns. The Petri net's structure can be represented by the following directed adjacency matrix, written in block form as
\begin{equation}
 A=\begin{pmatrix}0&E^-\\(E^+)^\top&0\end{pmatrix}\in\N_0^{n\times n}.
 \label{eq:adj}
\end{equation}
Notice the bipartition: there are no $P \to P$ or $T \to T$ arcs and $P \cap T = \emptyset$. The \emph{incidence matrix} is defined as $ B=E^+-E^-\in\mathbb{Z}^{n_P\times n_T}$. It encodes the net change of tokens at each place when a transition fires. This object is fundamental in Petri net theory, as it captures the structural relationships between places and transitions~\cite{murata}. For a marking (token state) $m\in\N_0^{n_P}$, a transition $t_j$ is enabled when $m_i\geq E^-_{ij}$ for every place $p_i$. Therefore, an enabled firing sequence with transition-count vector $x\in\N_0^{n_T}$ satisfies $m'=m+Bx$, an important fact for the interpretation of our later derivations.

\section{Petri net random-walk encodings}\label{sec:rwpe}
We now start by presenting the first of our constructions, the Petri net random-walk positional encoding (PNRWPE). Define
\begin{align}
 d_P&=E^-\ones_{n_T} \in \mathbb{N}_0^{n_P}, & d_T&=(E^+)^\top\ones_{n_P} \in \mathbb{N}_0^{n_T}\\
 \Phi&=D_P^{\dagger}E^- \in [0,1]^{n_P \times n_T}, & \Psi&=D_T^{\dagger}(E^+)^\top \in [0,1]^{n_T \times n_P},
 \label{eq:normalization}
\end{align}
where $D_P=\diag(d_P)$ and $D_T=\diag(d_T)$. The symbol $\dagger$ denotes the reciprocal of each positive diagonal entry and zero at a zero entry. The matrix $\Phi$ describes a walker that, when at a place, chooses to move to a transition in proportion to the corresponding arc weight, and $\Psi$ has a similar interpretation for the other direction. It is worth noting that $\Phi$ and $\Psi$ have row sums in $\{0,1\}$. We would like to emphasize this last statement: a zero row stops the walk, which corresponds to the walker landing on a sink node. Define the walk transition operator as
\begin{equation}
 Q=\begin{pmatrix}0&\Phi\\\Psi&0\end{pmatrix} \in [0,1]^{n \times n}.
\end{equation}

The walker starts at a node and takes a random walk according to $Q$. The return probability after $k$ steps is the probability that the walker is back at the same node after $k$ steps, given by the diagonal of $Q^k$. Since the walk alternates between the two partitions, such a return can only happen after an even number of steps, because bipartite graphs have no odd cycles. Algebraically, this is easily seen from the fact that $\diag(Q^{2k+1})=0, \; \forall k \in \N_0$. On the other hand, for even walks we have that
\begin{equation}
 Q^2=\begin{pmatrix}\Phi\Psi&0\\0&\Psi\Phi\end{pmatrix}  \implies
 Q^{2k}=\begin{pmatrix}(\Phi\Psi)^k&0\\0&(\Psi\Phi)^k\end{pmatrix}.
 \label{eq:even}
\end{equation}
We can thus focus on the two-step operators $M_P=\Phi\Psi \in [0,1]^{n_P\times n_P}$ and $M_T=\Psi\Phi \in [0,1]^{n_T\times n_T}$, which describe the return probabilities for places and transitions, respectively. One step of $M_P$ follows a place-transition-place path, while one step of $M_T$ follows a transition-place-transition path. We are now left with a final problem, which is that of sinks, as they can simply stop the walk. To address this, we define an absorption correction that adds the missing probability mass to the diagonal of each two-step operator. This ensures that the return probabilities remain well-defined. For $a\in\{P,T\}$, define $\zeta_a=\ones-M_a\ones$ and $R_a=M_a+\diag(\zeta_a)$. Since $M_a$ is substochastic, $\zeta_a\geq0$ and furthermore $R_a\ones = M_a\ones + \zeta_a = \ones$. This correction keeps all the missing probability mass at the current node. A sink node will thus absorb the probability mass and the walker will remain at that node for all future steps. For a fixed horizon $K$, the Petri net random-walk positional encoding is defined as the vector of return probabilities after $2,4,\ldots,2K$ steps
\begin{align}
\label{eq:pnrwpe}
 r_{p_i}&=\big[(R_P)_{ii},(R_P^2)_{ii},\ldots,(R_P^K)_{ii}\big],
 r_{t_j}&=\big[(R_T)_{jj},(R_T^2)_{jj},\ldots,(R_T^K)_{jj}\big]. 
\end{align}

\subsection{Acyclic nets} \label{sec:teleport}
There is a practical use case that we must take into consideration for our positional encodings. If the directed graph representing the Petri net is acyclic, $M_a$ has zero diagonal. A walk can return under $R_a$ only by repeatedly using the added self-loop, which implies that $(R_a^k)_{ii}=(\zeta_a)_i^k$. The encodings will therefore be highly correlated on acyclic nets even when their structures differ substantially, which might limit their discriminative power. In simpler terms, variation in return signatures will be concentrated around cycles and stopping points, which can be considered both a strong and a weak point of PNRWPE, depending on the specific application.

We therefore define an additional PNRWPE variant that permits occasional jumps to nodes of the opposite type. Such jumps create fictitious return routes even when the original net has no directed cycle. This alternative is essentially an application of the PageRank walk~\cite{bianchini2005inside} adapted for place-transition alternation. At a place with outgoing arcs, the walker follows $\Phi$ with probability $\alpha$ and jumps uniformly to a transition with probability $1-\alpha$. At a transition, it follows $\Psi$ or jumps uniformly to a place. A node with no outgoing arcs always jumps, which bypasses the expressivity constraint described above. We therefore define the $P \to T$ and $T \to P$ transition matrices as
\begin{align}
 \Phi_\alpha
 &=\alpha\Phi+\frac{1}{n_T}
    (\ones_{n_P}-\alpha\Phi\ones_{n_T})\ones_{n_T}^{\top}
    \in\R^{n_P\times n_T},\label{eq:teleport-phi}\\
 \Psi_\alpha
 &=\alpha\Psi+\frac{1}{n_P}
    (\ones_{n_T}-\alpha\Psi\ones_{n_P})\ones_{n_P}^{\top}
    \in\R^{n_T\times n_P}.\label{eq:teleport-psi}
\end{align}
The deficit in parentheses is a column vector, and its outer product distributes that deficit across destinations. Note that the normalization uses the size of the destination partition, meaning that teleportation is only defined \emph{between} partitions.

Following the same logic as in the previous section, we define the two-step operators $M_{P,\alpha}=\Phi_\alpha\Psi_\alpha$ and $M_{T,\alpha}=\Psi_\alpha\Phi_\alpha$ and use them to define the teleportation PNRWPE, for a fixed horizon $K$, through
\begin{equation}\label{eq:teleport}
    Q_\alpha
    =
    \begin{pmatrix}
    0 & \Phi_\alpha \\
    \Psi_\alpha & 0
    \end{pmatrix}, \;
    Q_\alpha^{2k}
    =
    \begin{pmatrix}
    M_{P,\alpha}^{k} & 0 \\
    0 & M_{T,\alpha}^{k}
    \end{pmatrix},
    \;
    \begin{aligned}
    r_{p_i}^{\mathrm{tel}}
    &= \big[(M_{P,\alpha})_{ii}, (M_{P,\alpha}^{2})_{ii}, \ldots, (M_{P,\alpha}^{K})_{ii}\big], \\[0.5em]
    r_{t_j}^{\mathrm{tel}}
    &= \big[(M_{T,\alpha})_{jj}, (M_{T,\alpha}^{2})_{jj}, \ldots, (M_{T,\alpha}^{K})_{jj}\big].
    \end{aligned}
\end{equation}

\begin{proposition}
For $a\in\{P,T\}$, $n_P,n_T>0$, and $0<\alpha<1$, $Q_\alpha$ is irreducible with period two. $M_{a,\alpha}$ is irreducible and aperiodic, and thus has a unique stationary row distribution $\pi_a$.
\end{proposition}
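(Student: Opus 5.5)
The plan rests on one entrywise lower bound for the teleporting matrices. Fix a place $p_i$ and suppose first that it has outgoing arcs. Then $(\Phi\ones_{n_T})_i=1$, so the deficit in \eqref{eq:teleport-phi} is $1-\alpha$. If instead $p_i$ is a sink, the $i$-th row of $\Phi$ is zero and the deficit is $1$. In either case the deficit is at least $1-\alpha>0$. Every entry of $\alpha\Phi$ is nonnegative, so we obtain
\[
(\Phi_\alpha)_{ij}\geq \frac{1-\alpha}{n_T}>0 \quad\text{for all } i,j .
\]
The same argument applied to \eqref{eq:teleport-psi} gives $(\Psi_\alpha)_{ji}\geq (1-\alpha)/n_P>0$. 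The row sums also come out right: $\Phi_\alpha\ones_{n_T}=\alpha\Phi\ones+(\ones-\alpha\Phi\ones)=\ones$, and similarly for $\Psi_\alpha$. Hence $\Phi_\alpha$ and $\Psi_\alpha$ are row-stochastic matrices with all entries strictly positive, and consequently $Q_\alpha$ is stochastic.

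\emph{Irreducibility and period of $Q_\alpha$.} The off-diagonal blocks of $Q_\alpha$ are entrywise positive, so any place reaches any transition in one step. Any two places are connected in two steps by passing through an arbitrary transition, and likewise for two transitions. Together these give irreducibility. For the period, every return to a node has even length because $Q_\alpha$ is block off-diagonal, exactly as noted for $Q$ before \eqref{eq:even}. Returns of length two do exist, since $(Q_\alpha^2)_{ii}=(M_{a,\alpha})_{ii}>0$. The gcd of the return lengths is therefore $2$.

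\emph{Properties of $M_{a,\alpha}$.} By \eqref{eq:teleport} the matrix $M_{P,\alpha}=\Phi_\alpha\Psi_\alpha$ is a product of positive stochastic matrices. Each entry satisfies
\[
(M_{P,\alpha})_{ik}=\sum_j(\Phi_\alpha)_{ij}(\Psi_\alpha)_{jk}\geq n_T\cdot\frac{1-\alpha}{n_T}\cdot\frac{1-\alpha}{n_P}>0,
\]
and it has row sums one. A positive stochastic matrix is irreducible, since every state reaches every other in one step. It is also aperiodic, because all of its diagonal entries are positive. By Perron--Frobenius, or the standard theorem for finite irreducible chains, it then has a unique stationary row distribution $\pi_P$, and this distribution is strictly positive. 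The argument for $M_{T,\alpha}=\Psi_\alpha\Phi_\alpha$ is identical.

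There is no real obstacle here. The one step needing care is the uniform positivity of the teleportation deficit, which must hold at sink rows as well as at rows with outgoing arcs, because sink rows are where $\Phi\ones$ vanishes. This is also where the hypotheses are used: $\alpha<1$ keeps the deficit at least $1-\alpha>0$ at every row, and $n_P,n_T>0$ makes the uniform jump distributions well defined. The argument does not need $\alpha>0$, but the statement's assumption $0<\alpha<1$ includes $\alpha<1$, which is what the proof uses.
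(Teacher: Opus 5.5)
Your proof is correct and follows the same route as the paper's. Both use the entrywise bounds $(\Phi_\alpha)_{ij}\geq(1-\alpha)/n_T$ and $(\Psi_\alpha)_{ji}\geq(1-\alpha)/n_P$, the block off-diagonal structure for the period, and strict positivity of $M_{a,\alpha}$ for irreducibility, aperiodicity, and uniqueness of $\pi_a$; your extra checks (the deficit at sink rows, row-stochasticity, and $(Q_\alpha^2)_{ii}>0$ showing the period is exactly two rather than merely even) fill in details the paper leaves implicit.
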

\begin{proof}
By construction, every entry of $\Phi_\alpha$ is at least $(1-\alpha)/n_T>0$, and every entry of $\Psi_\alpha$ is at least $(1-\alpha)/n_P>0$, so it is possible to reach any place from a transition and any transition from a place. As previously established, the off-diagonal block structure of $Q_\alpha$ implies that returns can occur only at even times, giving the chain a period of two. Consequently, the two-step transition operator $Q_\alpha^2$ decouples into two independent subchains governed by the respective matrices $M_{a,\alpha}$. Due to the strict positivity of $\Phi_\alpha$ and $\Psi_\alpha$, their products $M_{a,\alpha}$ must have all-positive entries. Thus, each subchain is independently irreducible and aperiodic on its respective partition. By the Fundamental Theorem of Markov Chains~\cite{maltese1986simple}, each $M_{a,\alpha}$ possesses a unique stationary row distribution $\pi_a$ within the isolated partition $a \in \{P, T\}$.
\end{proof}

In summary, teleportation introduces return paths even when the original net is acyclic, allowing return signatures to reflect more than the local stopping probabilities captured by the original PNRWPE. This can yield more diverse walk signatures, but it does not guarantee that different nodes receive distinct encodings. Given that each two-step subchain converges to its stationary distribution, it is best to use short finite horizons in practice to capture transient return behavior, as long-horizon return probabilities will approach stationarity and not add any new positional information. 

\section{Diffusion-augmented message passing on Petri nets}\label{sec:da}
\subsection{Incidence diffusion}
\label{sec:incidence}
We start this section by considering the example given in Figure~\ref{fig:pair}, where both nets have one possible successor at every node. The second net produces two tokens at $p_2$ when $t_1$ fires, but this distinction disappears from $\Phi$ and $\Psi$, and hence from PNRWPE. Nevertheless, we can intuitively ``see'' that the two nets are very different: the left net conserves the total number of tokens, while the right net does not. Our goal will be to capture this difference and use it to augment message passing in a GNN operating on Petri nets.
\begin{figure}[ht]
\centering
\begin{tikzpicture}[>=Stealth,place/.style={circle,draw,minimum size=7mm},trans/.style={rectangle,draw,minimum width=3mm,minimum height=7mm},font=\small]
\foreach \x/\weight in {0/1,6/2}{
 \node[place] (p1\x) at (\x,0) {$p_1$};
 \node[trans,label=above:$t_1$] (t1\x) at (\x+1.3,0.65) {};
 \node[place] (p2\x) at (\x+2.6,0) {$p_2$};
 \node[trans,label=below:$t_2$] (t2\x) at (\x+1.3,-0.65) {};
 \draw[->] (p1\x)--(t1\x);
 \draw[->] (t1\x)--node[above right] {$\weight$}(p2\x);
 \draw[->] (p2\x)--(t2\x);
 \draw[->] (t2\x)--(p1\x);
}
\end{tikzpicture}
\caption{A change in token production that PNRWPE cannot detect. Unmarked arc weights are one. Subscripts 1 and 2 in the equations denote the left and right net, respectively.}
\label{fig:pair}
\end{figure}

For the example above, the corresponding consumption and production matrices are
\begin{equation}
 E^-_1=E^-_2=I_2,\qquad
 E^+_1=\begin{pmatrix}0&1\\1&0\end{pmatrix},\qquad
 E^+_2=\begin{pmatrix}0&1\\2&0\end{pmatrix}.
\end{equation}
For both nets, $\Phi=I_2$ and $\Psi=\ones_{2 \times 2} - I_2$. Every node therefore has random-walk coordinates $[0,1,0,1,\ldots]$. But a clear distinction arises if we look at their incidence matrices:
\begin{equation}
 B_1=\begin{pmatrix}-1&1\\1&-1\end{pmatrix},\qquad
 B_2=\begin{pmatrix}-1&1\\2&-1\end{pmatrix},
 \qquad \det B_1=0,\quad\det B_2=-1.
 \label{eq:pair}
\end{equation}
In the left net, firing both transitions and repeating the loop has zero net effect, and the total number of tokens is conserved. In the right net, the same firing counts add one token at $p_2$. Putting this discussion in more algebraic terms, let us consider what we call a place signal $y\in\R^{n_P}$, which assigns a scalar to each place. If $B^\top y=0$, then the transition sequence that updates the net's marking (Section~\ref{subsec:petri-structure}) reveals a conservation phenomenon:
\begin{equation}
 y^\top m'=y^\top m+y^\top Bx=y^\top m.
 \label{eq:pinv}
\end{equation}
An analogous phenomenon can be seen for a transition signal $x\in\R^{n_T}$ with $Bx=0$. These conservations are called P (place) and T (transition) invariants in Petri nets, although the classical notions are usually taken with integer coefficients, as these have a direct token-count interpretation~\cite{murata}. Here we work with real vectors, as this will allow us to easily characterize the invariant subspaces and use them to augment message passing. The existence of a place-invariant vector $y\in\R^{n_P}$ such that $B^\top y=0$ implies that the weighted sum of tokens at the places is preserved by the corresponding transition sequence. The existence of a transition-invariant vector $x\in\R^{n_T}$ such that $Bx=0$ implies that the net effect of firing the transitions in the sequence described by $x$ is zero. The singular value decomposition $B=U\Sigma V^\top$ reveals the two spaces where these phenomena arise. Place-invariant modes lie in the left null space $\ker B^\top$, while transition-invariant modes lie in the right null space $\ker B$. Define the Gram matrices $L_P=BB^\top$ and $L_T=B^\top B$. These Gram matrices assign a nonnegative energy to each signal:
\begin{equation}
 y^\top L_Py=\norm{B^\top y}_2^2,\qquad
 x^\top L_Tx=\norm{Bx}_2^2.
 \label{eq:energy}
\end{equation}
For a place signal, the first energy measures the squared imbalance between weighted inputs and outputs at each transition. For a transition signal, the second measures the squared net change at the places. Thus zero energy means exact linear balance, since $\ker L_P=\ker B^\top$ and $\ker L_T=\ker B$. This observation provides an interesting notion of smoothness that we will induce in our learning procedure. The Gram matrices are symmetric positive semidefinite, and their eigenvalues are nonnegative. The zero eigenvalues correspond to invariant (the balanced/stable) subspaces, while positive eigenvalues correspond to imbalanced modes. Additionally, the eigenvectors of $L_P$ and $L_T$ can be chosen to form orthonormal bases for $\R^{n_P}$ and $\R^{n_T}$, respectively, allowing us to decompose any signal into balanced and imbalanced components. At this point the construction we are aiming at should become clear: we want to use the incidence matrix to define a diffusion operator that suppresses imbalanced components while retaining balanced ones. This will allow us to augment message passing with higher-order messages that are informed by the structural conservation capabilities of the Petri net.

Let $\gamma = 1 + \|B\|_1\|B\|_\infty$, where $\|B\|_1$ and $\|B\|_\infty$ denote the maximum absolute column sum and maximum absolute row sum of $B$, respectively. Then $\gamma > \|B\|_2^2$, since $\|B\|_2^2 = \rho(B^\top B) \le \|B^\top B\|_1 \le \|B\|_\infty\|B\|_1$ by the definition of the spectral norm, the bound of the spectral radius by any induced norm, and submultiplicativity, where $\rho(\cdot)$ denotes the spectral radius. Define the diffusion operators
\begin{equation}
 S_P=I_{n_P}-\gamma^{-1}L_P,\qquad
 S_T=I_{n_T}-\gamma^{-1}L_T.
 \label{eq:diffusion}
\end{equation}
Each one of the above operators controls a discrete diffusion process. For places, for example, the update $y^{(k+1)} = y^{(k)} - \gamma^{-1}L_Py^{(k)}$ is a gradient-descent step of size $\gamma^{-1}$ on the energy $\tfrac12 y^\top L_Py$. The same holds for transitions.

\begin{proposition}
For $a\in\{P,T\}$, the operator $S_a$ fixes every vector of $\ker L_a$ and strictly contracts its orthogonal complement. Consequently,
\begin{equation}
 \lim_{k\to\infty}S_P^k=\Pi_{\ker B^\top},\qquad
 \lim_{k\to\infty}S_T^k=\Pi_{\ker B},
 \label{eq:limit}
\end{equation}
where $\Pi$ is the orthogonal projector onto the indicated space.
\end{proposition}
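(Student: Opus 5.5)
The plan is to work in the orthonormal eigenbasis of the Gram matrix $L_a$, which exists because $L_a$ is symmetric positive semidefinite (Section~\ref{sec:incidence}). Write $L_a=\sum_i \lambda_i u_iu_i^\top$ with $\lambda_i\ge 0$ and $\{u_i\}$ orthonormal. Since $S_a=I-\gamma^{-1}L_a$ is a polynomial in $L_a$, it has the same eigenvectors, with eigenvalues $\mu_i=1-\lambda_i/\gamma$. Everything then reduces to locating the numbers $\mu_i$.

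First, for $u\in\ker L_a$ we have $S_au=u-\gamma^{-1}L_au=u$, so $S_a$ fixes $\ker L_a$ pointwise. Second, every positive eigenvalue satisfies $0<\lambda_i\le\|L_a\|_2=\|B\|_2^2<\gamma$. Here $\|BB^\top\|_2=\|B^\top B\|_2=\|B\|_2^2$, and the strict inequality is exactly the bound $\gamma>\|B\|_2^2$ established just before \eqref{eq:diffusion}. Hence $\mu_i\in(0,1)$ for those indices. Because $S_a$ is symmetric, the orthogonal complement $(\ker L_a)^\perp$ is invariant and spanned by the $u_i$ with $\lambda_i>0$. On that subspace $\|S_a\|_2=\max_{\lambda_i>0}\mu_i<1$, which is the strict contraction.

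For the limit, I would write $S_a^k=\sum_{\lambda_i=0}u_iu_i^\top+\sum_{\lambda_i>0}\mu_i^k u_iu_i^\top$. Since finitely many $\mu_i\in(0,1)$ satisfy $\mu_i^k\to0$, the second sum vanishes in norm as $k\to\infty$. The first sum is the orthogonal projector onto $\ker L_a$. Finally, I identify the kernels: $\ker L_P=\ker B^\top$ and $\ker L_T=\ker B$. This follows from the energy identity \eqref{eq:energy}, since $y^\top L_Py=0$ iff $B^\top y=0$, and positive semidefiniteness gives $L_Py=0\iff y^\top L_Py=0$. This yields \eqref{eq:limit}.

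The only delicate step is the strict inequality $\lambda_i<\gamma$, which keeps the eigenvalues away from $0$ and the boundary. It ensures $\mu_i>0$, and in particular $\mu_i\ne-1$, so there is no oscillation. It is supplied by the paper's choice $\gamma=1+\|B\|_1\|B\|_\infty$; the edge case $B=0$ is harmless, since then $S_a=I$ and the projector is $I$.
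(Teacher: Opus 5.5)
Your proof is correct and follows the same route as the paper's. Both arguments diagonalize $L_a$ in an orthonormal eigenbasis, use $\norm{L_a}_2=\norm{B}_2^2<\gamma$ to place the eigenvalues of $S_a$ in $(0,1]$, and let the positive modes decay; your explicit spectral sum and your identification of $\ker L_P=\ker B^\top$ and $\ker L_T=\ker B$ spell out details the paper leaves implicit, and the strict bound $\lambda<\gamma$ is needed only for that stated range, since convergence alone requires just $\lambda<2\gamma$.
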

\begin{proof}
Each $L_a$ is symmetric positive semidefinite with $\norm{L_a}_2=\norm{B}_2^2<\gamma$, so its eigenvalues satisfy $0\leq\lambda<\gamma$. The corresponding eigenvalues of $S_a$ are $1-\lambda/\gamma\in(0,1]$. A zero mode has eigenvalue one and is therefore fixed at every step. Every positive mode decays to zero. Applying this argument in an orthonormal eigenbasis proves the limits.
\end{proof}

\subsection{Diffusion-augmented Petri net graph neural networks}\label{sec:diffusion}
The diffusion operators describe how a signal at one node relates to signals at other nodes of the same type. For $a\in\{P,T\}$ we can diagonalize $L_a$ using the eigendecomposition $L_a=U_a\Lambda_aU_a^\top$, which implies $S_a=U_a (I_{n_a} - \gamma^{-1}\Lambda_a)U_a^\top$ and hence the entries of the kernel $S_a^k$ are
\begin{equation}
 (S_a^k)_{ij}
 =\sum_m (U_a)_{im}(U_a)_{jm}(1-\lambda_{a,m}/\gamma)^k,
 \label{eq:pair-spectral}
\end{equation}
where $\lambda_{a,m}$ is the $m$-th eigenvalue of $L_a$. Relating to what we mentioned before, $(S_a^k)_{ij}$ describes the response at node $i$ to a unit signal initially placed at node $j$ at the $k$-th diffusion step. At long times, these responses describe how the two nodes participate together in the invariant subspace. At finite times they also retain components with positive energy. This suggests using the full kernel as a relation between nodes. In practice we compute
\begin{equation}
 c_{a,ij}=\big[(S_a^2)_{ij},(S_a^4)_{ij},(S_a^{16})_{ij},
                   (S_a^{64})_{ij}\big]\in\R^4,
 \qquad a\in\{P,T\},
 \label{eq:pair-features}
\end{equation}
for every ordered pair of the same type, including self-pairs. Larger exponents suppress imbalanced modes more strongly. Each kernel is symmetric, $c_{a,ij}=c_{a,ji}$, so the kernels carry no direction. Direction remains available through the original arcs and the optional PNRWPE walk features.

The kernels described above will be used to modulate message passing, but their entries can potentially have very small values, especially off the diagonal, which can hinder learning. We therefore rescale the two sets of entries separately (for each net, node type, and exponent). Formally, let $\mathcal I_{\rm self}=\{(i,i)\}$ and $\mathcal I_{\rm off}=\{(i,j):i\ne j\}$ within type $a$. For either set $\mathcal I$, define
\begin{equation}
 s_{\mathcal I}(S_a^\tau)
 =\left(\frac{1}{|\mathcal I|}\sum_{(i,j)\in\mathcal I}
                  (S_a^\tau)_{ij}^2\right)^{1/2},\qquad
 \widehat S_{a,ij}^\tau=
 \begin{cases}
 (S_a^\tau)_{ij}/s_{\mathcal I}(S_a^\tau),&s_{\mathcal I}>0,\\
 0,&s_{\mathcal I}=0.
 \end{cases}
 \label{eq:rms}
\end{equation}
This root-mean-square (RMS) scaling uses no labels or statistics from other nets, and it preserves signs and exact zeros. Replacing the four entries in Equation~\eqref{eq:pair-features} with their rescaled values gives $\widehat c_{a,ij}$. At this point, it is worth noting an important distinction between the diffusion operators and the features derived from them. The spectral bounds and fixed spaces proved and discussed above apply to $S_a^k$. Separate rescaling of diagonal and off-diagonal entries need not preserve those properties. By using $\widehat c_{a,ij}$ to describe a pair to the model, we aim to provide a direct and meaningful signal derived from balance relations that the model can learn to use. Higher-order message passing of this form can be seen as a generalization of the original message-passing framework in Equation~\eqref{eq:msg-passing}, where the aggregation is performed over all nodes of the same type, rather than just the immediate neighbors, and the model can learn to emphasize or de-emphasize certain pairs within the same partition.

We conclude by providing the actual extension for each message-passing round $\ell$ and node type $a$. In addition to the standard procedure of Equation~\eqref{eq:msg-passing}, a DA-PNGNN adds a message from every node of the same type, modulated by a learned function of the diffusion pair features. The higher-order message to node $i$ is
\begin{equation}
 m_{a,i}^{(\ell)}
 =\sum_{j\in a}
 g\!\left(W_{g,a}^{(\ell)}\widehat c_{a,ij}+b_{g,a}^{(\ell)}\right)
 \odot
 \left(W_{{\rm src},a}^{(\ell)}h_j^{(\ell)}
       +W_{{\rm dst},a}^{(\ell)}h_i^{(\ell)}\right).
 \label{eq:pair-message}
\end{equation}
Here $h_i^{(\ell)}\in\R^d$ is the hidden state of node $i$ at round $\ell$. The matrices $W_{\rm src},W_{\rm dst}\in\R^{d \times d}$ transform node states, while $W_g\in\R^{d \times4}$ and $b_g\in\R^{d}$ turn the pair features into one coefficient per hidden channel. Parameters are separate for each type and round. Our construction therefore acts as a higher-order neural gating mechanism. In practice we apply $g$ elementwise and use either $g(u)=u$ (a linear gate) or $g(u)=\tanh(u)$ (a nonlinear gate). These choices are motivated by the fact that both activations allow signed coefficients. Every same-type node becomes available in one round, thereby including information on nodes beyond the reach of $L$ message-passing rounds along the original arcs, where $L$ is the total number of rounds. We would also like to mention that our design choices are in line with existing literature on using graph diffusion information for improved GNNs~\cite{gasteiger2019diffusion,mialon2021graphit}.

The proposed approach has a straightforward limitation. Adding the same matrix $H\geq0$ to $E^-$ and $E^+$ leaves $B$ unchanged, and reversing every arc, which replaces $B$ by $-B$, leaves both Gram matrices unchanged. Pairwise diffusion therefore describes balance relations in a symmetric way without directly relating to execution semantics and direction. This is precisely why we augment the original arc messages with the diffusion terms, making our proposed architectural design more complete.

We conclude with a note regarding graph symmetries: reordering the nodes gives $\widetilde B=\Pi_PB\Pi_T^\top$ for permutation matrices $\Pi_P,\Pi_T$. Each place operator transforms by conjugation with $\Pi_P$, and each transition operator by conjugation with $\Pi_T$. The scale $\gamma$ and the two normalizing RMS values are invariant as scalars. Walk features also reorder equivariantly with any permutation, and the same can be said about the resulting higher-order messages in Equation~\eqref{eq:pair-message}, since the sum over same-type senders does not depend on their order. This makes both contributions equivariant to permutations, and any permutation-invariant readout then gives a graph representation independent of node order.


\section{Experiments}\label{sec:experiments}
We organize our experimental section around three questions. Does Petri net structure, exposed in the two forms we propose, improve supervised models that recover properties of a net? Does the same structure help when no labels are available and the representation of the net is shaped only by its agreement with other views of the same process? Finally, which of the two constructions proves the more beneficial in whatever gain is observed, and why? We address the first two questions using two benchmarks of deliberately different character, and the third with a family of ablations that remove one ingredient at a time while holding the architecture, the parameter count, and the optimization fixed. Throughout, we call a model together with its input and configuration an \emph{arm}. Each arm is trained once per training seed and, on PDC, once per (outer) cross-validation fold. We report means and standard deviations over training seeds in order to assess average behavior and volatility.

\subsection{Experimental setting}\label{sec:setting}

\paragraph{Benchmarks.}
The first benchmark we use consists of the 96 PDC 2023 process models studied in PetriNet2Vec~\cite{colonna}.\footnote{\url{https://tinyurl.com/ydr73eck}.} Each net realizes one combination of six generation factors: long-term dependencies (A), loops (B), OR constructs (C), invisible tasks (D), optional tasks (E), and duplicate activity labels (F). Factor B has three classes and the other five have two, giving $3\cdot2^5=96$ combinations. The nets have 25 to 33 places and 28 to 43 transitions, with 65.6 nodes on average. All arcs have unit weight.

The second benchmark comes from ProcRosetta~\cite{berti}, which learns a shared latent space for three views of a process: a process tree, an event log, and a Petri net. Its generator draws random block-structured process trees of depth at most three, with at most three children per logical operator and between 2 and 30 activities, converts each tree into a workflow net with PM4Py~\cite{pm4py}, and plays out 16 traces. We experiment with two settings. The original \emph{paper} setting allows the sequence, exclusive-choice, and parallel operators and therefore yields acyclic nets. The \emph{loops} setting, which is our extension and not part of the original study, additionally allows the loop operator, so its nets contain directed cycles. Each setting has 4000 training, 512 validation, and 512 test processes, generated once with the original study's seed (13) and then frozen.\footnote{The generator of the original study is only partially seeded: its seed fixes the process trees, but the remaining two modalities depend on OS randomness through PM4Py. When reproducing, we seeded the global generator, generated each setting once, froze the files, and verified by checksum that every run used them. The process trees are therefore those determined by the original work, but the traces and the stored node order of the original study cannot be fully reproduced.} Test nets have 18.2 nodes on average (at most 58) in the paper setting and 20.7 (at most 69) with loops, again with unit arc weights throughout.

The two benchmarks are complementary by design. PDC 2023 offers a testbed in terms of what we might consider structural isomorphism classes of Petri nets, since every net differs from every other in a known and enumerable way. ProcRosetta is larger and self-supervised, with the generated nets depending on the tree structure. This implies that the quality of a net representation is measured only by how well it agrees with two other views of the same process. A model that performs well on both is unlikely to be exploiting an artifact of either.

\paragraph{PNGNNs and node inputs.}\label{sec:models}
All GNNs build on the Petri net encoder of ProcRosetta~\cite{berti}, which we refer to as the Petri net GNN (PNGNN), so that the baseline is a studied model rather than an arbitrary construction of our own. Each node $i$ has a kind $\kappa(i)\in\{\text{place},\text{visible transition},\text{silent transition}\}$ and a type $a(i)\in\{P,T\}$, and each place carries a marking feature $\mu_i\in\R^2$ holding its initial and final token count, which is zero for transitions. The encoder therefore sees node types, markings, and arcs but \emph{no activity labels by design}. Writing $\bar A\in\{0,1\}^{n\times n}$ for the directed adjacency of Equation~\eqref{eq:adj} with every positive weight set to one, the shared computation is
\begin{align}
 h_i^{(0)}&=\operatorname{Dropout}\!\left(t_{\kappa(i)}+W_\mu\mu_i+W_ee_i\right), \label{eq:gnn-input}\\
 u_i^{(\ell)}
 &=W_{\rm self}^{(\ell)}h_i^{(\ell)}
   +\sum_{j:\bar A_{ji}=1}W_{\rm in}^{(\ell)}h_j^{(\ell)}
   +\sum_{j:\bar A_{ij}=1}W_{\rm out}^{(\ell)}h_j^{(\ell)}
   +m_{a(i),i}^{(\ell)}, \label{eq:aug-update}\\
 h_i^{(\ell+1)}
 &=\operatorname{Dropout}\!\left(\operatorname{LayerNorm}\!\left(\operatorname{ReLU}(u_i^{(\ell)})\right)\right),
 \qquad \ell=0,1,2,\nonumber
\end{align}
with $t_\kappa$ a learned embedding of the node kind, $W_\mu\in\R^{d\times2}$ layer weights for the marking, $W_{\rm self},W_{\rm in},W_{\rm out}\in\R^{d\times d}$ separate weights for static, incoming, and outgoing information, and $W_e\in\R^{d\times w}$ layer weights for a $w$-dimensional positional encoding $e_i$. ReLU denotes the rectified linear unit, LayerNorm denotes layer normalization~\cite{ba2016layer}, and Dropout denotes dropout~\cite{srivastava2014dropout}. Setting $e_i=0$ and $m\equiv0$ recovers the original encoder exactly, which makes every comparison in what follows a strict addition to a fixed baseline. After three rounds, a masked mean over nodes, dropout, and a linear map give a 64-dimensional graph vector $z_G$. The ProcRosetta experiments add a second linear map for the latent variance, which is not needed for training on PDC. We follow the original study and set $d=128$.

PNGNN arms differ only in the input vector $e_i$ and in whether the same-type message $m_{a(i),i}^{(\ell)}$ is present. Each arm's bias-free projection $W_e\in\R^{d\times w}$ is exactly as wide as its input: $w=16$ for PNRWPE at $K=16$, $w=8$ for the encodings at $K=8$, $w\in\{16,32\}$ for undirected RWPE at 16 or 32 steps, and $w=1$ for out-degree. An arm without node input has no projection, so it is the upstream encoder. For a given seed every weight shared between arms starts identical, and parameter counts differ only by the projection, by at most $128\times32$. The inputs we compare are the zero vector, PNRWPE at $K=16$, the original PNRWPE truncated to $K=8$, and the teleportation PNRWPE of Equation~\eqref{eq:teleport} at $K=8$ with $\alpha\in\{0.85,0.95\}$.

DA-PNGNN adds the same-type message of Equation~\eqref{eq:pair-message} in every round, with separate parameters per type and per round. Its pair features are the rescaled kernels $\widehat c_{a,ij}$ of Equation~\eqref{eq:rms} at exponents 2, 4, 16, and 64, computed from the \emph{weighted} incidence matrix, so arc multiplicities enter the kernels even though $\bar A$ is binarized. Messages are summed over all nodes of the same type within a net. The diffusion branch draws its initial weights from a separate random stream, so for a given seed every component shared with the plain PNGNN starts bit-identical. Unless stated otherwise, reported DA-PNGNN results use the linear gate.

\paragraph{Shallow embedding methods and structural descriptors.}
Since our PDC setting is taken from PetriNet2Vec~\cite{colonna}, we compare against it as a shallow embedding method. PetriNet2Vec learns one vector per net with the distributed memory model of paragraph vectors (PV-DM)~\cite{le2014distributed}, following the reference implementation.\footnote{\url{https://github.com/juancolonna/PetriNet2Vec}. We reuse the same implementation with two exceptions: one worker thread instead of eight, so that fits are repeatable, and 500 training epochs as reported in the paper rather than the 1000 used in the repository notebooks.} Each net is a document with one sentence per transition-place-transition path, made of the two transition labels. Silent transitions share one token, and sentences whose two tokens are equal are dropped. Vectors have dimension 8 and are trained for 500 epochs with window 2, five negative samples, and minimum token count 1. A cosine 1-nearest-neighbor (1-NN) classifier then predicts each factor, making this task very closely linked to retrieval. PetriNet2Vec has no node input, so to combine it with ours we fuse its vector $v_G$ with a typewise mean of a node positional encoding by
\begin{equation}
 s_G=\left[\frac1{n_P}\sum_{p\in P}e_p\;\middle\Vert\;
           \frac1{n_T}\sum_{t\in T}e_t\right],\qquad
 z_G^+=[\nu(v_G)\Vert\nu(s_G)],
 \label{eq:fusion}
\end{equation}
where $\nu(u)=u/\norm{u}_2$ for $u\ne0$ and $\nu(0)=0$. The fusion changes which training net is nearest without altering the PetriNet2Vec fit itself. Because $s_G$ pools the node input, it has $2w$ coordinates: 32 for PNRWPE at $K=16$ and 16 for the encodings at $K=8$. Applying the same 1-NN readout to $s_G$ alone gives what we call the \emph{pooled descriptor}. To compare against a naive structural descriptor, we additionally represent each net through the following 15 statistics: the numbers of places, transitions, visible transitions, silent transitions, and arcs, the total consumed and produced multiplicity, and the mean and maximum weighted in- and out-degree of places and of transitions. These are standardized within each training fold and read out with a cosine 1-NN. Note that the statistics above include the number of silent transitions by design, which will be important for the information-access discussion below.

\paragraph{Objectives and optimization.}\label{sec:objectives}
On PDC, six linear heads predict the generation factors from $z_G$ and are trained with an equally weighted cross-entropy,
\begin{equation}
 \widehat p_f(G)=\operatorname{softmax}(W_f z_G+b_f),\qquad
 \mathcal L=-\frac{1}{6|\mathcal D_{\rm train}|}
 \sum_{G\in\mathcal D_{\rm train}}\sum_{f\in\{A,\ldots,F\}}
 y_f(G)^\top\log\widehat p_f(G),
 \label{eq:supervised}
\end{equation}
where $y_f(G)$ is the one-hot label of factor $f$. We use no label smoothing. Since ProcRosetta trains its encoder without labels, this supervised objective is our own adaptation to the PDC setting. On ProcRosetta we instead leave the objective entirely untouched. Each view $v\in\{\text{tree},\text{log},\text{net}\}$ is encoded as a diagonal Gaussian over $\R^{64}$: process trees as token sequences through a gated recurrent unit (GRU) encoder~\cite{cho2014learning}, event logs through a GRU per trace with attention pooling over traces, and Petri nets through the PNGNN followed by two heads for the mean and log-variance. One GRU decoder reconstructs the tree's token sequence from a sample of each view's latent under a grammar mask that restricts it to valid process trees, and training minimizes
\begin{equation}
 \mathcal L_{\rm PR}
 =\mathcal L_{\rm tree\to tree}+\mathcal L_{\rm log\to tree}+\mathcal L_{\rm net\to tree}
  +0.1\,\mathcal L_{\rm align}+0.1\,\mathcal L_{\rm con}+0.001\,\mathcal L_{\rm KL},
 \label{eq:pr-objective}
\end{equation}
where the three reconstruction terms are token cross-entropies with label smoothing $0.05$, $\mathcal L_{\rm align}$ averages the mean squared difference between latent means over the three pairs of views, $\mathcal L_{\rm con}$ is a symmetric InfoNCE loss~\cite{oord2018cpc} on the normalized means of each pair with temperature $0.2$ and the other processes of the batch as negatives, and $\mathcal L_{\rm KL}$ averages the Kullback-Leibler (KL) divergence of the three latent distributions from $\mathcal N(0,I)$. We change only the Petri net encoder; tokenization, the other encoders, the decoder, and every evaluation metric are taken from the reference implementation.\footnote{\url{https://github.com/fit-alessandro-berti/proc-rosetta}, commit \texttt{5a6ba64}, the last revision before the preprint~\cite{berti}.} The Petri net encoder has 166{,}656 parameters, the input projection adds 2{,}048 for PNRWPE at $K=16$ (1{,}024 for the encodings at $K=8$), and the diffusion branch another 200{,}448, so the full multimodal model grows from 524{,}713 to at most 727{,}209 parameters.

For speed and, more importantly, to keep the comparison honest, the networks in both benchmarks use the optimization settings of ProcRosetta (Table~2 in~\cite{berti}) without any tuning: AdamW~\cite{loshchilov} with learning rate $10^{-3}$ and weight decay $10^{-4}$, batch size 32, dropout $0.15$, and gradient-norm clipping at 5; the learning rate halved after two epochs without validation improvement down to a floor of $10^{-5}$; training stopped after five such epochs or 100 epochs, counting an epoch as an improvement only if it lowers the validation loss by more than $10^{-3}$; and evaluation of the checkpoint with the lowest validation loss. No arm received a hyperparameter search.

\paragraph{Evaluation protocol.}\label{sec:protocol}
On PDC we use five-fold grouped and stratified cross-validation, so that every net is predicted exactly once per training seed by a model that never saw it, with a further inner split reserving one fold for validation. Around 16 nets therefore decide when training stops and which checkpoint is kept, and the remaining 60 to 62 are used for training. The inner split is not cycled through, since no hyperparameter search is performed. The splitting follows three rules. First, the two nets of each F-pair, which a model without activity labels sees as identical, always fall in the same fold. Second, the folds are balanced on the loop factor B, the only factor with three classes: each loop class covers a third of the nets, so an unstratified split could leave a partition short of a class, and we verified empirically that balancing on B suffices for every class of every factor to occur in every partition. Third, the split is drawn once with seed 0 and shared by all models and training seeds, so every comparison uses the same folds. Our primary metric is balanced accuracy (BA), the mean over a factor's classes of the fraction of its nets classified correctly, and we summarize a model by the macro BA across the six factors. All values are means over ten training seeds, 0 to 9. The DA-PNGNN configuration and the horizon $K$ were chosen on seeds 0 to 4 alone; seeds 5 to 9 were run afterwards and are pooled with them.

An important caveat governs how these numbers may be read. Different model families have different access to information. Removing activity labels but keeping node kinds collapses the 96 nets into 48 isomorphism classes, each a pair differing only in F. Ignoring visibility as well leaves only 24 distinct structures, because D changes which transitions are silent and nothing else. The PNGNNs, which read no activity labels, therefore observe D but can never observe F, so their BA on F cannot exceed the $0.500$ of random guessing. A structural descriptor computed from arcs alone, such as the pooled walk features, observes neither D nor F. PetriNet2Vec reads activity labels and observes both. We kindly ask the reader to keep this in mind when comparing absolute scores across families in order not to mix modeling quality with information access.

On ProcRosetta the splits are those produced by the generator. All metrics come from the reference implementation and are computed on the 512 test processes from the latent means, averaged over seeds 13 to 22, with 13 being the original study's seed. Several rest on a behavioral distance between two sets of traces,
\begin{equation}
 d_{\rm beh}(X,Y)=\tfrac13\Big(\norm{p^{\rm var}_X-p^{\rm var}_Y}_1+\norm{p^{\rm df}_X-p^{\rm df}_Y}_1+\norm{p^{\rm len}_X-p^{\rm len}_Y}_1\Big),
 \label{eq:behavior}
\end{equation}
with $p^{\rm var}$, $p^{\rm df}$, and $p^{\rm len}$ the empirical distributions of trace variants, of directly-follows pairs with start and end markers, and of trace lengths. We report the aspects of the original study that directly involve the Petri net view: \emph{decoding}, where a tree is decoded greedily under the grammar mask from the latent mean $\mu_{\rm net}$ of the net view, scored by the exact-tree rate, the token edit distance normalized by the longer sequence, and $d_{\rm beh}$ between the 16 test traces and up to 16 traces played out from the decoded tree; \emph{embedding quality}, given by the Spearman correlation $\rho$ over all test pairs between embedding cosine distance and behavioral distance, and by the nearest-neighbor (NN) behavior, the mean behavioral distance between a process and its nearest embedding neighbor; \emph{cross-modal retrieval}, where each process queries the 512 test processes of another view by cosine similarity, scored by top-1 accuracy in the four directions involving the net; and the \emph{net-to-tree test loss}.

\paragraph{Controls and ablations.}
Given the plug-and-play form of our contributions, we also analyze how changing or isolating various components affects learning. \emph{Shuffled PNRWPE} permutes the walk features of each net, either within node type or across all nodes. \emph{Another net's descriptor} randomly replaces a net's pooled descriptor with that of a different net. \emph{Undirected RWPE} replaces our encoding with the standard construction of Equation~\eqref{eq:rwpe-undirected}, $\diag(Q_{\mathrm{u}}^k)$ for $k=1,\ldots,16$ or $k=1,\ldots,32$, where $Q_{\mathrm{u}}$ row-normalizes the symmetrized adjacency $A+A^\top$. \emph{Out-degree} supplies each node's summed outgoing multiplicity as a minimal structural baseline. \emph{Zero kernels} is the central control for DA-PNGNN. The diffusion kernel is set to zero while the architecture and parameters are kept, so the message of Equation~\eqref{eq:pair-message} degenerates to
\begin{equation}
 m_{a,i}^{(\ell)}
 =g\big(b_{g,a}^{(\ell)}\big)\odot
 \sum_{j\in a}
 \left(W_{{\rm src},a}^{(\ell)}h_j^{(\ell)}
       +W_{{\rm dst},a}^{(\ell)}h_i^{(\ell)}\right).
 \label{eq:zero-control}
\end{equation}
The arm retains global same-type context and the dependence of the message on the number of senders, but loses every variation of the gate across pairs. Finally, the \emph{original PNRWPE at $K=8$} serves as a width-matched reference for the teleportation features, separating the effect of the operator from that of the input width. Table~\ref{tab:grid} in Appendix~\ref{app:arms} enumerates every arm of both studies for clarity.

\paragraph{Implementation.} All experiments run on one NVIDIA GB10 system (aarch64) with Python 3.12.3 and PyTorch 2.14.0~\cite{paszke} on CUDA 13.0. Positional encodings and kernels are computed on the CPU in float64. All models train on the GPU in float32 with determinism enabled; on ProcRosetta, eight runs share the GPU through NVIDIA MPS, each with three CPU threads, whereas the original study trained on the CPU. Appendix~\ref{app:cost} reports the preprocessing and training costs for the main results obtained in this study.

\subsection{Supervised factor prediction on PDC 2023}\label{sec:results-pdc}
Table~\ref{tab:main} collects the PDC results. The main takeaway is consistent with what one would expect from theoretical considerations. The plain PNGNN is not a great reader of Petri net structure: its single worst blind spot is directed cycles, and both of our constructions improve it precisely there.

The plain PNGNN reaches a macro BA of $0.751$. Behind that average lies a striking failure: on the loop factor B it scores $0.275$, which is below the $0.333$ that can be obtained on a balanced three-class problem by random guessing. Adding PNRWPE raises macro BA by $0.044$, positive in 9 of the 10 seeds, and it does so by nearly doubling BA on B to $0.506$. A return probability under the alternating walk is, by definition, a statement about directed cycles, and B is a factor that is directly affected by them.

The diffusion kernels help considerably more. With the linear gate, DA-PNGNN gains $0.078$ over the PNGNN, positive in every seed, and with the tanh gate $0.072$. Combining either gate with PNRWPE gives the best results overall on PDC, $0.832$ and $0.831$, although the margin over the linear gate alone is within seed deviation. The combination is worth mentioning because both constructions are computed starting from the same two matrices $E^-$ and $E^+$, but the information they encode is different.

The comparison with PetriNet2Vec is also positive, but it must be read with the information-access caveat of Section~\ref{sec:setting} firmly in mind. PetriNet2Vec reaches $0.808$ while reading activity labels, which lets it observe both D and F. Our PNGNNs read no activity labels, so their BA on F is pinned at $0.500$ by construction. DA-PNGNN is nonetheless the only family that consistently exceeds PetriNet2Vec, and it does so with a handicap. Structure alone, when properly utilised, compensates globally for a modality that is never observed.

The two gate variants of DA-PNGNN trade factors against one another, with tanh ahead on B and linear ahead on C and E. The PDC runs establish no overall accuracy advantage for either. We default to the linear gate because it is markedly cheaper, as it can split the kernel channels into matrix products, whereas tanh materializes a tensor over pairs and hidden channels and uses an order of magnitude more memory (Appendix~\ref{app:cost}). For a more complete overview of performance per class, please consult Figure~\ref{fig:pdc_factor_heatmap} in Appendix~\ref{app:paired}.

\begin{table}[htbp]
\centering
\small
\setlength{\tabcolsep}{4pt}
\begin{tabular}{@{}lcc@{}}
\toprule
Model & Macro BA $\uparrow$ & BA on B, loops $\uparrow$\\
\midrule
\multicolumn{3}{@{}l}{\emph{Shallow embeddings observing every factor}}\\
\\
PetriNet2Vec & $0.808\pm0.009$ & $0.766\pm0.047$\\
PetriNet2Vec + pooled PNRWPE & $0.811\pm0.007$ & $1.000\pm0.000$\\

\midrule
\midrule

\multicolumn{3}{@{}l}{\emph{1-NN readout on a fixed descriptor, without activity labels}}\\
\\
Structural statistics & $0.722$ & $0.333$\\
Pooled PNRWPE & $0.722$ & $1.000$\\

\midrule
\midrule

\multicolumn{3}{@{}l}{\emph{Learned end to end, without activity labels}}\\
\\
PNGNN & $0.751\pm0.037$ & $0.275\pm0.071$\\
PNGNN + PNRWPE & $0.796\pm0.034$ & $0.506\pm0.097$\\
DA-PNGNN, linear & $0.830\pm0.011$ & $0.579\pm0.055$\\
DA-PNGNN, linear + PNRWPE & $\mathbf{0.832\pm0.014}$ & $0.619\pm0.045$\\
DA-PNGNN, tanh & $0.824\pm0.016$ & $\underline{0.652\pm0.033}$\\
DA-PNGNN, tanh + PNRWPE & $\underline{0.831\pm0.011}$ & $\mathbf{0.692\pm0.050}$\\
\bottomrule
\end{tabular}
\caption{PDC 2023: out-of-fold balanced accuracy, mean $\pm$ standard deviation over the ten training seeds. The two 1-NN readouts are deterministic and do not depend on the training seed. The \textbf{best} and \underline{second-best} results are marked \emph{within the end-to-end family without activity labels}, which is the only group whose members share an information budget. Supervision, information access, and fitting protocol all differ between PetriNet2Vec and the PNGNNs (Section~\ref{sec:setting}), please consult the text for context.}
\label{tab:main}
\end{table}

\paragraph{The kernel values are important.}
The most immediate objection to DA-PNGNN is that its advantage comes from the 200k additional parameters and from all-pairs connectivity rather than from anything the incidence matrix contributes. The zero-kernel control of Equation~\eqref{eq:zero-control} answers this directly, and the answer is unambiguous. Stripped of kernel content while keeping every parameter, every pairwise connection, and the same sum aggregation, the model falls to $0.686\pm0.039$ with the linear gate and $0.698\pm0.025$ with tanh. This is below the plain PNGNN that we set out to improve. Real kernels beat zero kernels by $0.13$ to $0.14$ on average, and in every one of the ten seeds in all four configurations (Appendix~\ref{app:paired}). Thus, using a dense all-pairs channel that has no signal to discriminate among its senders simply adds a term proportional to the number of same-type nodes in the messages. The structural content of $\widehat c_{a,ij}$ is what makes the higher-order scheme learn how to gate.

\paragraph{The biases make up for the receptive field.}
Figure~\ref{fig:depth} shows what happens when we increase the number of message-passing rounds $L$ from one to five for the plain PNGNN, DA-PNGNN with the linear gate, and DA-PNGNN with PNRWPE ($L=3$ reproduces Table~\ref{tab:main}). The plain PNGNN improves steadily with depth, from $0.580$ at $L=1$ to $0.794$ at $L=4$, and only with four or more rounds does its BA on B become slightly better than a random guess ($0.381$ at $L=4$). The augmented models require no depth: DA-PNGNN reaches $0.787$ with a single round, and $0.835$ with PNRWPE. Both peak at two rounds, at $0.833$ and $0.860$, the best PDC result in this study, and decline beyond, to $0.814$ and $0.811$ at $L=5$. The two biases therefore inject valuable structural information that can easily compensate for a limited receptive field of the base model. The decay in performance as the number of rounds grows beyond a certain point is in line with the oversmoothing problem of GNNs~\cite{Oono2020Graph}. We kept $L=3$ for the main results to match the original encoder in~\cite{berti}, but our findings indicate that using structural biases appropriately can render simpler and cheaper architectures effective. For added clarity, the BA values are listed in Table~\ref{tab:depth} of Appendix~\ref{app:depth}. 

\begin{figure}[htbp]
    \centering
    \includegraphics[width=\textwidth]{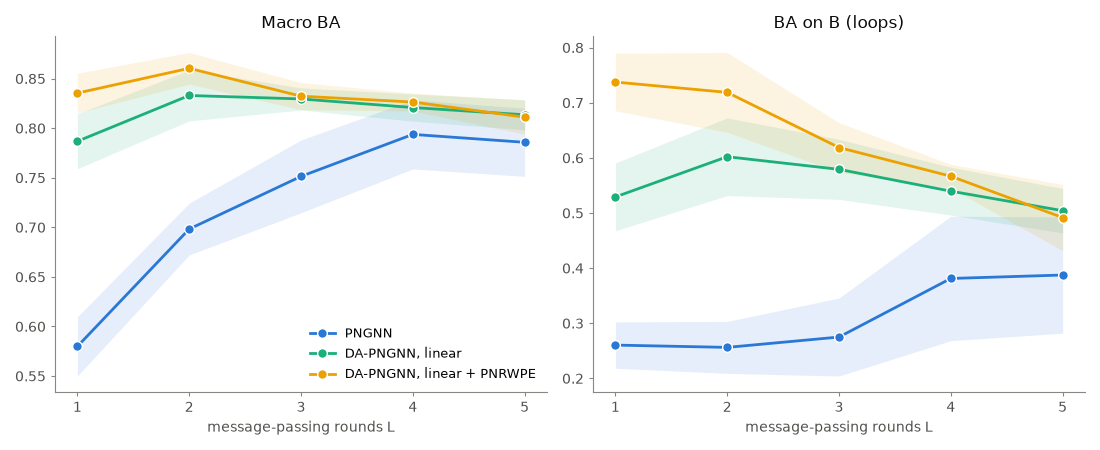}
    \caption{PDC: macro BA (left) and BA on B (right) against the number of message-passing rounds, mean and one standard deviation over ten seeds.}
    \label{fig:depth}
\end{figure}

\paragraph{Diving into the walk signatures.}
PNRWPE on PDC is strikingly degenerate at the node level, in exactly the way Section~\ref{sec:teleport} predicts. Of all nodes, $68.0\%$ carry an all-zero signature and $71.0\%$ a signature that never changes over the 16 steps, while $88.8\%$ share their signature with another node of the same type in the same net. Our empirical observations then tell us that a net of about 66 nodes has only around 14 distinct signatures. The encoding therefore almost never identifies an individual node. To see this, note that randomly shuffling signatures across all nodes still retains $+0.021\pm0.016$ of the $+0.044\pm0.032$ gain (Appendix~\ref{app:paired}). Taken at face value, this would undercut the encoding, but this is a graph-level problem and the assignment of a signature to a particular node carries little meaning. What does matter is that the descriptors are net-specific: replacing a net's pooled descriptor with another costs $0.093\pm0.007$ in BA over all seeds.

To investigate further, we use PNRWPE as a pooled descriptor which is then read with a cosine 1-NN, where it manages to separate B perfectly (Table~\ref{tab:main}). Our PE can then be best understood as a cycle detector that lives at the node level. This is precisely the information the plain PNGNN lacks on PDC, which is why so modest a feature produces such large changes. The limitations of this approach are equally instructive. The pooled descriptor reaches a macro BA of $0.722$, numerically identical to the 15 structural statistics, yet the two agree on essentially nothing: the statistics score $0.333$ on B where pooled PNRWPE scores $1.000$, and pooled PNRWPE scores $0.083$ on D and $0.500$ on F. The reason traces directly to what the walk retains. Changing D makes certain transitions silent rather than visible, and changing F renames activities so that distinct transitions share a label. Neither operation alters $E^-$ or $E^+$, so the walk matrices and every node signature are identical. We provide a rationale for the low performance on class $D$ in Appendix~\ref{app:paired}.

\paragraph{Orientation matters when walking.}
A natural question is whether any random-walk feature would have served as well. We find that this is not the case on PDC. Substituting the standard undirected RWPE changes macro BA over the plain PNGNN by only $+0.010\pm0.024$ at 16 steps and $+0.010\pm0.030$ at 32 steps, with no differences on B. The symmetrization $A+A^\top$ destroys the direction that gives a Petri net walk its meaning. A plain out-degree feature is the more serious competitor. It changes macro BA by $+0.040\pm0.052$, positive in 8 of 10 seeds, and improves B from $0.275$ to $0.425$, about two thirds of what PNRWPE does. Part of the loop signal is thus already visible in local arc counts, which the directed walk reads more reliably. In the fusion setup, pooled undirected RWPE moves PetriNet2Vec by at most $0.002$ and pooled out-degree lowers it by $0.005$.

\paragraph{Walk fusion into PetriNet2Vec trades performance on different factors.}
Fusing pooled PNRWPE into PetriNet2Vec raises A from $0.560$ to $0.805$ and B from $0.766$ to $1.000$, but lowers C from $0.896$ to $0.735$ and E from $0.628$ to $0.326$. The macro gain is therefore very small. Because Equation~\eqref{eq:fusion} normalizes both blocks, the fused cosine similarity is the average of the two cosine similarities, the nearest neighbor under this average need not be the nearest under either feature alone. This can be considered a limitation of our naive fusion rule rather than of the encoding, and it is the reason why we prefer to use the end-to-end models as primary evidence.

\subsection{Self-supervised multimodal alignment on ProcRosetta}\label{sec:results-pr}
The PDC results establish that structure helps a supervised model recover the modifications that generated a net. We now evaluate a different scenario, where training is performed without labels over a larger set of available nets and the model is later evaluated on different problems. Tables~\ref{tab:procrosetta-decode} and~\ref{tab:procrosetta-retrieval} collect the results. The main takeaway is that DA-PNGNN improves almost every metric that directly involves the net: on acyclic nets the gain concentrates in retrieval, and once loops are present it extends to decoding and the reconstruction loss as well.

\paragraph{Reproducibility of the published model.}
Before changing anything we verified that the original model we have access to is faithful. Averaged over ten seeds in the paper setting, the unmodified encoder reaches an exact-tree rate of $35.9\%$ against the published $35.9\%$, a normalized edit distance of $0.278$ against $0.282$, a behavior L1 of $0.833$ against $0.839$, a behavior correlation of $0.593$ against $0.574$, and an NN behavior of $0.935$ against $0.913$ (Table~\ref{tab:procrosetta-decode}). In retrieval (Table~\ref{tab:procrosetta-retrieval}), it achieves tree$\to$net top-1 of $45.0\%$ against $44.3\%$, net$\to$log of $14.5\%$ against $15.4\%$, and a net-to-tree test loss of $0.845$ against $0.837$. Given that the original study~\cite{berti} reports a single run, confirming reproducibility allows us to confidently make changes to the PNGNN encoder and attribute resulting differences to these changes.

\paragraph{Diffusion remains beneficial in both settings.}
In the paper setting, DA-PNGNN improves all retrieval directions involving the net as a query: net$\to$tree top-1 rises by $5.9$ points and net$\to$log by $2.7$, each in all ten seeds (Table~\ref{tab:procrosetta-retrieval}). Decoding (Table~\ref{tab:procrosetta-decode}) is not greatly affected: the exact-tree rate moves by $0.5$ points, the normalized edit distance falls by $0.009$ in 10 of 10 seeds, and behavior L1 by $0.017$ in 8 of 10. The net-to-tree loss falls by $0.009$ in 9 of 10 seeds, showing that training is slightly better. On acyclic workflow nets, the kernels sharpen where each net sits relative to the other two modalities without making the decoded tree appreciably better.

With loops the improvement extends to decoding and holds in at least 9 of 10 seeds on most measures, which is expected given that token conservation is related to cycles. Decoding improves on the exact-tree rate, by $0.7$ points to $36.0\%$ in 8 of 10 seeds, on the normalized edit distance by $0.009$, and on NN behavior by $0.012$, each in 10 of 10 seeds, while behavior L1 moves by only $0.005$. Retrieval improves in all relevant directions, by $4.1$ on net$\to$tree and $2.9$ on net$\to$log, all while the net-to-tree loss falls by $0.012$ in every seed. The fact that in this setting decoding, retrieval, and reconstruction all improve together under a configuration chosen on an entirely different benchmark and objective is a strong piece of evidence regarding the effectiveness of DA-PNGNN.

\begin{table}[htbp]
\centering
\footnotesize
\setlength{\tabcolsep}{3pt}
\begin{tabular}{@{}lcccccc@{}}
\toprule
 & \multicolumn{3}{c}{Decoding from $\mu_{\rm net}$} & \multicolumn{2}{c}{Embedding} & Loss\\
\cmidrule(lr){2-4}\cmidrule(lr){5-6}\cmidrule(l){7-7}
Encoder & Exact $\uparrow$ & Edit $\downarrow$ & L1 $\downarrow$ & $\rho\uparrow$ & NN $\downarrow$ & N$\to$T $\downarrow$\\
\midrule
\multicolumn{7}{@{}l}{\emph{Paper setting (acyclic nets)}}\\
\\
Original study, seed 13 & 35.9 & 0.282 & 0.839 & 0.574 & 0.913 & 0.837\\
PNGNN & \pmsd{35.9}{0.3} & \pmsd{0.278}{0.003} & \pmsd{0.833}{0.010} & \pmsd{0.593}{0.010} & \pmsd{0.935}{0.008} & \pmsd{0.845}{0.004}\\
+ PNRWPE & \pmsd{35.9}{0.2} & \pmsd{0.276}{0.003} & \pmsd{0.828}{0.007} & \pmsd{0.584}{0.017} & \pmsd{0.932}{0.003} & \pmsd{0.842}{0.004}\\
+ PNRWPE, shuffled & \pmsd{36.0}{0.3} & \pmsd{0.278}{0.003} & \pmsd{0.840}{0.006} & \pmsd{0.566}{0.021} & \pmsd{0.939}{0.008} & \pmsd{0.849}{0.005}\\
DA-PNGNN & \pmsd{$\mathbf{36.4}$}{0.4} & \pmsd{$\mathbf{0.269}$}{0.003} & \pmsd{$\mathbf{0.816}$}{0.010} & \pmsd{$\mathbf{0.601}$}{0.011} & \pmsd{$\underline{0.929}$}{0.008} & \pmsd{$\underline{0.836}$}{0.003}\\
DA-PNGNN + PNRWPE & \pmsd{36.2}{0.3} & \pmsd{$\mathbf{0.269}$}{0.003} & \pmsd{$\underline{0.820}$}{0.008} & \pmsd{$\underline{0.594}$}{0.015} & \pmsd{$\mathbf{0.926}$}{0.005} & \pmsd{$\mathbf{0.835}$}{0.004}\\
DA-PNGNN, zero kernels & \pmsd{$\underline{36.3}$}{0.5} & \pmsd{0.277}{0.005} & \pmsd{0.823}{0.012} & \pmsd{0.584}{0.021} & \pmsd{0.937}{0.006} & \pmsd{0.845}{0.004}\\

\midrule
\midrule

\multicolumn{7}{@{}l}{\emph{Loops setting (nets with directed cycles)}}\\
\\
PNGNN & \pmsd{35.3}{0.4} & \pmsd{0.268}{0.002} & \pmsd{0.862}{0.007} & \pmsd{0.582}{0.008} & \pmsd{0.952}{0.004} & \pmsd{0.875}{0.008}\\
+ PNRWPE & \pmsd{35.5}{0.3} & \pmsd{0.264}{0.003} & \pmsd{$\underline{0.856}$}{0.005} & \pmsd{0.580}{0.009} & \pmsd{0.949}{0.005} & \pmsd{0.872}{0.007}\\
+ PNRWPE, shuffled & \pmsd{35.2}{0.5} & \pmsd{0.272}{0.003} & \pmsd{0.867}{0.007} & \pmsd{0.575}{0.013} & \pmsd{0.953}{0.003} & \pmsd{0.882}{0.008}\\
DA-PNGNN & \pmsd{$\mathbf{36.0}$}{0.5} & \pmsd{$\mathbf{0.259}$}{0.004} & \pmsd{0.857}{0.014} & \pmsd{$\underline{0.583}$}{0.010} & \pmsd{$\mathbf{0.940}$}{0.006} & \pmsd{$\underline{0.863}$}{0.006}\\
DA-PNGNN + PNRWPE & \pmsd{$\underline{35.9}$}{0.4} & \pmsd{$\mathbf{0.259}$}{0.006} & \pmsd{$\mathbf{0.855}$}{0.005} & \pmsd{$\mathbf{0.589}$}{0.013} & \pmsd{$\underline{0.941}$}{0.004} & \pmsd{$\mathbf{0.861}$}{0.006}\\
DA-PNGNN, zero kernels & \pmsd{35.4}{0.9} & \pmsd{0.274}{0.005} & \pmsd{0.875}{0.009} & \pmsd{0.568}{0.009} & \pmsd{0.962}{0.005} & \pmsd{0.879}{0.006}\\
\bottomrule
\end{tabular}
\caption{ProcRosetta: decoding, embedding quality, and test loss of the net view under each encoder, on the 512 test processes. Each cell gives the mean over seeds 13 to 22 with the sample standard deviation. Decoding is from $\mu_{\rm net}$; N and T denote the net and tree views, and the last column is the net-to-tree test loss. The first row is the single published run of the original study, it is shown here for reference only. The \textbf{best} and \underline{second-best} values of each setting are marked. PNRWPE uses $K=16$, and the shuffle permutes it across all nodes of each net. Retrieval for the same runs is in Table~\ref{tab:procrosetta-retrieval}.}
\label{tab:procrosetta-decode}
\end{table}

\begin{table}[htbp]
\centering
\footnotesize
\setlength{\tabcolsep}{6pt}
\begin{tabular}{@{}lcccc@{}}
\toprule
 & \multicolumn{4}{c}{Retrieval top-1 (\%) $\uparrow$}\\
\cmidrule(l){2-5}
Encoder & N$\to$T & T$\to$N & N$\to$L & L$\to$N\\
\midrule
\multicolumn{5}{@{}l}{\emph{Paper setting (acyclic nets)}}\\
\\
Original study, seed 13 & 34.2 & 44.3 & 15.4 & 14.1\\
PNGNN & \pmsd{34.6}{1.3} & \pmsd{45.0}{1.1} & \pmsd{14.5}{1.1} & \pmsd{13.5}{0.9}\\
+ PNRWPE & \pmsd{36.0}{1.2} & \pmsd{45.6}{1.6} & \pmsd{14.7}{1.4} & \pmsd{13.4}{0.6}\\
+ PNRWPE, shuffled & \pmsd{33.3}{0.9} & \pmsd{43.8}{1.2} & \pmsd{13.8}{1.0} & \pmsd{12.9}{0.7}\\
DA-PNGNN & \pmsd{$\mathbf{40.5}$}{1.4} & \pmsd{$\underline{46.0}$}{1.2} & \pmsd{$\mathbf{17.2}$}{0.7} & \pmsd{$\mathbf{16.1}$}{1.2}\\
DA-PNGNN + PNRWPE & \pmsd{$\underline{39.8}$}{1.1} & \pmsd{45.9}{1.4} & \pmsd{$\underline{17.1}$}{1.0} & \pmsd{$\underline{15.9}$}{0.8}\\
DA-PNGNN, zero kernels & \pmsd{35.0}{2.0} & \pmsd{$\mathbf{46.3}$}{1.5} & \pmsd{15.4}{0.9} & \pmsd{14.6}{1.1}\\

\midrule
\midrule

\multicolumn{5}{@{}l}{\emph{Loops setting (nets with directed cycles)}}\\
\\
PNGNN & \pmsd{45.5}{1.6} & \pmsd{52.6}{1.7} & \pmsd{15.8}{1.2} & \pmsd{15.4}{1.0}\\
+ PNRWPE & \pmsd{47.1}{2.0} & \pmsd{52.3}{1.7} & \pmsd{17.1}{0.9} & \pmsd{15.7}{0.8}\\
+ PNRWPE, shuffled & \pmsd{44.5}{1.6} & \pmsd{51.0}{2.1} & \pmsd{16.4}{1.4} & \pmsd{15.7}{0.9}\\
DA-PNGNN & \pmsd{$\mathbf{49.6}$}{2.4} & \pmsd{$\mathbf{54.6}$}{1.2} & \pmsd{$\mathbf{18.7}$}{1.4} & \pmsd{$\mathbf{16.3}$}{0.8}\\
DA-PNGNN + PNRWPE & \pmsd{$\underline{49.4}$}{1.4} & \pmsd{$\underline{54.0}$}{1.3} & \pmsd{$\underline{18.4}$}{1.5} & \pmsd{$\underline{16.1}$}{1.9}\\
DA-PNGNN, zero kernels & \pmsd{44.9}{1.9} & \pmsd{51.3}{1.5} & \pmsd{16.4}{1.2} & \pmsd{15.1}{1.5}\\
\bottomrule
\end{tabular}
\caption{ProcRosetta: cross-modal retrieval top-1 accuracy with the Petri net modality, for the same runs as Table~\ref{tab:procrosetta-decode}. Each cell gives the mean over seeds 13 to 22 with the sample standard deviation. N, T, and L denote the net, tree, and log views, and X$\to$Y reads as querying with X against candidates of type Y. Higher is better throughout. The first row is the single published run of the original study, it is shown here for reference only. The \textbf{best} and \underline{second-best} values of each setting are marked.}
\label{tab:procrosetta-retrieval}
\end{table}

\paragraph{The diffusion control isolates the kernel content.}
In the paper setting the full model beats the zero-kernel control on net$\to$tree by $5.5$ points in all ten seeds, on net$\to$log by $1.8$, on behavior $\rho$ by $0.017$, and on normalized edit distance by $0.008$, each in at least 9 of 10 seeds, while the control itself sits at the baseline (net$\to$tree $+0.4$). With loops the control again sits at the baseline (net$\to$tree $-0.6$). In both settings, when the net is the query, the improvement is attributable to kernel content rather than to giving every node access to every same-type node. In the loops setting, the full model beats the control in at least 9 of 10 seeds on the edit distance, behavior L1, $\rho$, NN behavior, net$\to$tree, tree$\to$net, net$\to$log, and the loss, and in 8 of 10 on the exact-tree rate.

\paragraph{PNRWPE degenerates on small acyclic nets.}
This is the confirmation of the claim made in Section~\ref{sec:teleport}. On the 4000 training nets of the paper setting, every PNRWPE signature is constant over the 16 steps, $87.9\%$ all-zero and $12.1\%$ all-one, leaving a net with four distinct signatures on average, two per node type, which merely mark where the walk stops. As explained previously, with no directed cycles $M_a$ has zero diagonal, so a return under $R_a$ can only happen through the fictitious self-loop, thus $(R_a^k)_{ii}=(\zeta_a)_i^k$. The learned models behave accordingly: PNRWPE moves net$\to$tree by $1.4$ points and every other metric by less than the spread across seeds. Once loops are allowed, the signatures vary: at $K=16$, $43.8\%$ of nodes are all-zero and $54.0\%$ constant, with 7.8 distinct signatures per net. PNRWPE then yields small but consistent improvements: the normalized edit distance by $0.004$, behavior L1 by $0.006$, and NN behavior by $0.003$, each in 9 of 10 seeds, and the net-to-tree loss by $0.003$ in 8 of 10, while $\rho$ and retrieval stay within noise.

The shuffle control needs a word of care, because it is the control one naturally reaches for and it answers a weaker question than it appears to. PNRWPE does beat its own shuffle in both settings, but the shuffled configuration is itself worse than the plain baseline, by $1.3$ and $1.0$ points on net$\to$tree. The comparison therefore establishes that these node signatures act as noise, not that correctly placed ones help beyond supplying none at all. We state this because the two readings are easy to conflate.

\paragraph{Teleportation removes degeneracy without adding utility.}
The teleportation variant was derived as a remedy for the degeneracy of PNRWPE on acyclic nets, so we evaluate it here. We use two jump parameters, $\alpha=0.85$ and $\alpha=0.95$, and set $K=8$ following the short-horizon recommendation of Section~\ref{sec:teleport}. We verified separately that little is lost by it, since coordinates 9 to 16 of the original encoding separate almost none of the nodes that already collide on coordinates 1 to 8. Every arm with an original PNRWPE input is rerun with the teleportation features at the same input width, under the same protocol, seeds, and controls, with the original encoding at $K=8$ added as a width-matched reference. For a given seed both $\alpha$ arms start from the same weights and share their shuffle permutations.

\begin{table}[htbp]
\centering
\small
\setlength{\tabcolsep}{3.5pt}
\begin{tabular}{@{}llccccc@{}}
\toprule
Setting & Encoding & All-zero & Constant & Colliding & Distinct & 1st dir.\\
\midrule
Paper & original & $87.9\%$ & $100\%$ & $86.9\%$ & 4.0 & $100\%$\\
 & teleport, $\alpha=0.85$ & $0$ & $5.7\%$ & $46.3\%$ & 12.3 & $97.9\%$\\
 & teleport, $\alpha=0.95$ & $0$ & $5.7\%$ & $46.4\%$ & 12.3 & $95.9\%$\\
\midrule
Loops & original & $45.7\%$ & $55.8\%$ & $82.8\%$ & 7.6 & $89.9\%$\\
 & teleport, $\alpha=0.85$ & $0$ & $4.9\%$ & $36.1\%$ & 15.8 & $93.0\%$\\
 & teleport, $\alpha=0.95$ & $0$ & $4.9\%$ & $36.1\%$ & 15.8 & $79.0\%$\\
\bottomrule
\end{tabular}
\caption{Feature diagnostics at $K=8$ on the 4000 training nets of each ProcRosetta setting. \emph{All-zero} and \emph{Constant}: nodes whose eight coordinates are all zero, or all equal. \emph{Colliding}: nodes sharing their signature, to ten decimal places, with another node of the same type in the same net. \emph{Distinct}: distinct signatures per net, counted per node type and summed. \emph{1st dir.}: share of the variance of the eight coordinates, over all nodes, along the first principal direction. Teleportation eliminates degeneracy on every count, yet the \emph{Colliding} and \emph{Distinct} columns are identical across the two values of $\alpha$. Slower mixing changes the values of the encoding, but not node separability.}
\label{tab:teleport-diag}
\end{table}

At the level of the features, teleportation does everything it was designed to do (Table~\ref{tab:teleport-diag}). All-zero signatures disappear entirely, constant signatures fall to a few percent, and the share of colliding nodes drops from roughly $85\%$ to under half. The number of distinct signatures per net roughly triples, from $4.0$ to $12.3$ in the paper setting and doubles from $7.6$ to $15.8$ in the loops setting. As far as every diagnostic one would write down in advance goes, the fix works.

It nonetheless does not translate to downstream performance, and three further diagnostics explain why. First, the two values of $\alpha$ separate exactly the same nodes: in more than $99.5\%$ of the ProcRosetta training nets, precisely the same nodes share a signature at $\alpha=0.85$ and at $\alpha=0.95$. Slower mixing changes the values the encoding takes, not which nodes it can tell apart. Second, the variation teleportation does add is practically one-dimensional. At $\alpha=0.85$, a principal component analysis reveals that the first principal direction carries $97.9\%$ of the variance of the eight coordinates in the paper setting and $93.0\%$ with loops. This being the case, the additional positional coordinates in fact contain no real value. Third, and most fundamentally, the extra return routes are fictitious by construction, and they report that a walker could reach a node again if permitted to teleport, which has no counterpart in the semantics of a Petri net. We therefore expect that a learner will not find anything useful in the extra variation, and that the model will not improve on the original encoding. This finding discourages PageRank-like walks in this domain.

We report detailed results that closely follow these diagnostics in Table~\ref{tab:teleport-pr} of Appendix~\ref{app:teleport}. Against no positional input, teleportation changes every decoding and retrieval metric by at most two points or about $0.01$. In the paper setting it gains $1.2$ to $1.8$ points on net$\to$tree and lowers the net-to-tree loss by $0.005$ to $0.006$, in 7 to 9 of 10 seeds, while in both settings it lowers the behavior correlation $\rho$ by $0.010$ to $0.016$, in 7 to 9 of 10 seeds. Against the width-matched original encoding the gains shrink to within a point and the loss of $\rho$ remains, reaching $0.018$ to $0.023$ in all ten seeds with loops. In summary, teleportation is a correct fix for the problem it targets and an ineffective one for the problem we care about. It restores variation to a degenerate feature, but the variation it provides is superfluous and not useful to the PNGNN. We report this negative result at length because it shows how a property of an encoding is not always the same thing as its value as an input to a learning model, and this relationship is task-dependent.

\section{Conclusion}\label{sec:conclusion}

\subsection{Limitations and future work}\label{sec:scope}
Our evidence rests on two synthetic families of nets, and it is worth being precise about what that supports. The two benchmarks measure different things: the supervised recovery of the parameters that generated a net, and the geometry of a self-supervised latent space shared with two other views of the same process. Whether either construction helps other self-supervised objectives, or process-similarity tasks evaluated by behavior rather than by generation factors, remains to be tested.

There are two additional caveats regarding DA-PNGNN that we want to name rather than bury. The first is that both benchmarks have unit arc weights, so the multiplicity distinction of Figure~\ref{fig:pair}, which motivated the kernels, is never tested explicitly in our experiments. The kernels are computed from the weighted incidence matrix and would register that distinction, but whether doing so helps a model remains untested. The second caveat concerns scaling. Pair kernels need $O(n_P^2+n_T^2)$ storage per exponent and dense matrix powers cost cubic time, which is cheap for nets of tens of nodes, such as ours, but not for industrial process models, where sparse structures or a low-rank factorization of $S_a^k$ would be needed. Figure~\ref{fig:cost} in Appendix~\ref{app:cost} places every variant by its epoch time and memory: the linear gate costs 2.6 times the plain encoder's epoch on PDC and 4 to 5\% more on ProcRosetta, where the encoder is a small part of the joint model, while the tanh gate needs 2.4 to 17 times the memory.

\subsection{Concluding remarks}\label{sec:concluding-remarks}
We asked whether the structure that Petri net theory makes explicit can improve learned representations of these formal process models. To this end, we exposed two kinds of structure to GNNs. The first captures where each node sits relative to directed cycles, through PNRWPE, and the second how the incidence matrix balances token flow, through the diffusion kernels of DA-PNGNN. The kernels improved results on both benchmarks: the supervised recovery of generation factors on PDC 2023, and the self-supervised multimodal objective of ProcRosetta, where the configuration chosen on PDC was used without retuning. A control that zeroes the kernels while keeping every parameter and every same-type connection lost the entire gain, so the improvement comes from the kernel values rather than from added capacity or connectivity. The walk encoding helped most where nets contain directed cycles and, as its derivation predicts, carried almost no information on acyclic nets. We see this as an instance of a broader pattern: when data arrives with its own established symmetries, useful inductive biases are already encoded in its geometry, and the work is to find an operator that carries them inside the learning model. Using this modern geometric deep learning approach for learning on Petri nets is, in our view, a promising and still largely unexplored direction at the intersection of machine learning and process mining.

\bibliographystyle{alpha} 
\bibliography{refs}

\clearpage

\appendix

\section{Experimental arms}\label{app:arms}
Table~\ref{tab:grid} enumerates every arm of both studies. On PDC, each arm trains one model per fold and per seed; on ProcRosetta, each arm trains one joint multimodal model per seed and setting.

\begin{table}[h]
\centering
\small
\begin{tabular}{@{}p{0.2\linewidth}p{0.55\linewidth}p{0.19\linewidth}@{}}
\toprule
Model & Node input or variant & Seeds\\
\midrule
\multicolumn{3}{@{}l}{\emph{PDC 2023: supervised factor prediction and the PetriNet2Vec setting}}\\
\\
PNGNN & no PE; PNRWPE ($K=16$); PNRWPE shuffled within type or across nodes & 0-9\\
PNGNN & undirected RWPE (16 or 32 steps); out-degree & 0-9\\
DA-PNGNN & linear or tanh gate, sum aggregation, rescaled kernels; real or zero kernels; no PE or PNRWPE & 0-9\\
Depth & PNGNN, DA-PNGNN (linear gate), and DA-PNGNN + PNRWPE at 1 to 5 message-passing rounds & 0-9\\
PetriNet2Vec & alone; fused with pooled PNRWPE or with another net's descriptor & 0-9\\
PetriNet2Vec & fused with pooled undirected RWPE or pooled out-degree & 0-9\\
1-NN only & pooled PNRWPE, undirected RWPE, and out-degree; structural statistics & deterministic\\
\midrule
\multicolumn{3}{@{}l}{\emph{ProcRosetta: self-supervised objective, paper and loops settings}}\\
\\
PNGNN & original; with PNRWPE ($K=16$); with PNRWPE shuffled across nodes & 13-22\\
DA-PNGNN & linear gate; no PE, PNRWPE ($K=16$), or zero kernels & 13-22\\
Teleportation & original PNRWPE ($K=8$) or teleportation PNRWPE ($\alpha\in\{0.85,0.95\}$); teleportation PNRWPE shuffled across nodes; DA-PNGNN with either encoding & 13-22\\
\bottomrule
\end{tabular}
\caption{Every arm run on the two benchmarks. Teleportation PNRWPE was additionally run on PDC under the same protocol, but we omit those arms since the construction targets acyclic nets and ProcRosetta's paper setting is the setting it was designed for.}
\label{tab:grid}
\end{table}

\section{Additional results on PDC}\label{app:paired}
Because every arm of a given seed shares its initialization with its comparators, we provide the paired comparisons discussed in Section~\ref{sec:results-pdc} in Table~\ref{tab:paired}. The kernel effect is the clearest signal in the table: real kernels beat the zero-kernel control in all ten seeds, in all four configurations, by between $0.126$ and $0.143$. Additionally, we provide a summary of the results per factor on PDC for clarity in Figure~\ref{fig:pdc_factor_heatmap}, as well as a study of the effect of the PNRWPE walk length on the performance of the method in Figure~\ref{fig:pdc_pnrwpe_horizon}. The latter shows that the gain is in place by $K=8$ and that a longer walk of $K=24$ adds nothing, which motivated fixing $K=16$ for every experiment except the width-matched teleportation comparisons.

We also motivate the low performance of PNRWPE as a pooled descriptor for factor D that can be seen in Figure~\ref{fig:pdc_factor_heatmap}. Such a result is expected and can be explained by the fold structure we use for validation, since nets differing only in D are not grouped together (unlike for F). For our split, in 80 of the 96 predictions the query's exact arc-duplicate sits in the candidate pool at cosine distance zero, but with the opposite D label. The 1-NN copies it, and all 80 are wrong. The other 16 have the twin held out with them and land at chance, giving 88 of 96 misclassifications and a balanced accuracy of $0.083$.

\begin{table}[htbp]
\centering
\small
\setlength{\tabcolsep}{4pt}
\begin{tabular}{@{}lc@{}}
\toprule
Paired comparison (first $-$ second) & Over 10 seeds\\
\midrule
PNGNN + PNRWPE $-$ PNGNN & $+0.044\pm0.032\ (9)$\\
PNGNN + PNRWPE $-$ same, shuffled across nodes & $+0.023\pm0.026\ (9)$\\
PNGNN + PNRWPE $-$ same, shuffled within type & $+0.025\pm0.027\ (8)$\\
PetriNet2Vec + pooled PNRWPE $-$ PetriNet2Vec & $+0.003\pm0.009\ (6)$\\
Fusion: own $-$ another net's descriptor & $+0.093\pm0.007\ (10)$\\
\midrule
DA-PNGNN, linear $-$ PNGNN & $+0.078\pm0.041\ (10)$\\
DA-PNGNN, tanh $-$ PNGNN & $+0.072\pm0.042\ (9)$\\
DA-PNGNN, linear + PNRWPE $-$ PNGNN + PNRWPE & $+0.036\pm0.037\ (9)$\\
DA-PNGNN, tanh + PNRWPE $-$ PNGNN + PNRWPE & $+0.035\pm0.029\ (9)$\\
Linear: real $-$ zero kernels & $+0.143\pm0.044\ (10)$\\
Tanh: real $-$ zero kernels & $+0.126\pm0.034\ (10)$\\
Linear + PNRWPE: real $-$ zero kernels & $+0.139\pm0.038\ (10)$\\
Tanh + PNRWPE: real $-$ zero kernels & $+0.133\pm0.050\ (10)$\\
DA-PNGNN, linear: + PNRWPE $-$ no PE & $+0.003\pm0.010\ (7)$\\
DA-PNGNN, tanh: + PNRWPE $-$ no PE & $+0.007\pm0.016\ (6)$\\
\bottomrule
\end{tabular}
\caption{PDC: paired differences in macro BA, mean $\pm$ standard deviation over the ten seeds, with the number of seeds in which the difference is positive given in parentheses. Rows whose labels do not mention PNRWPE use no node input.}
\label{tab:paired}
\end{table}

\begin{figure}[htbp]
    \centering
    \includegraphics[width=0.8\textwidth]{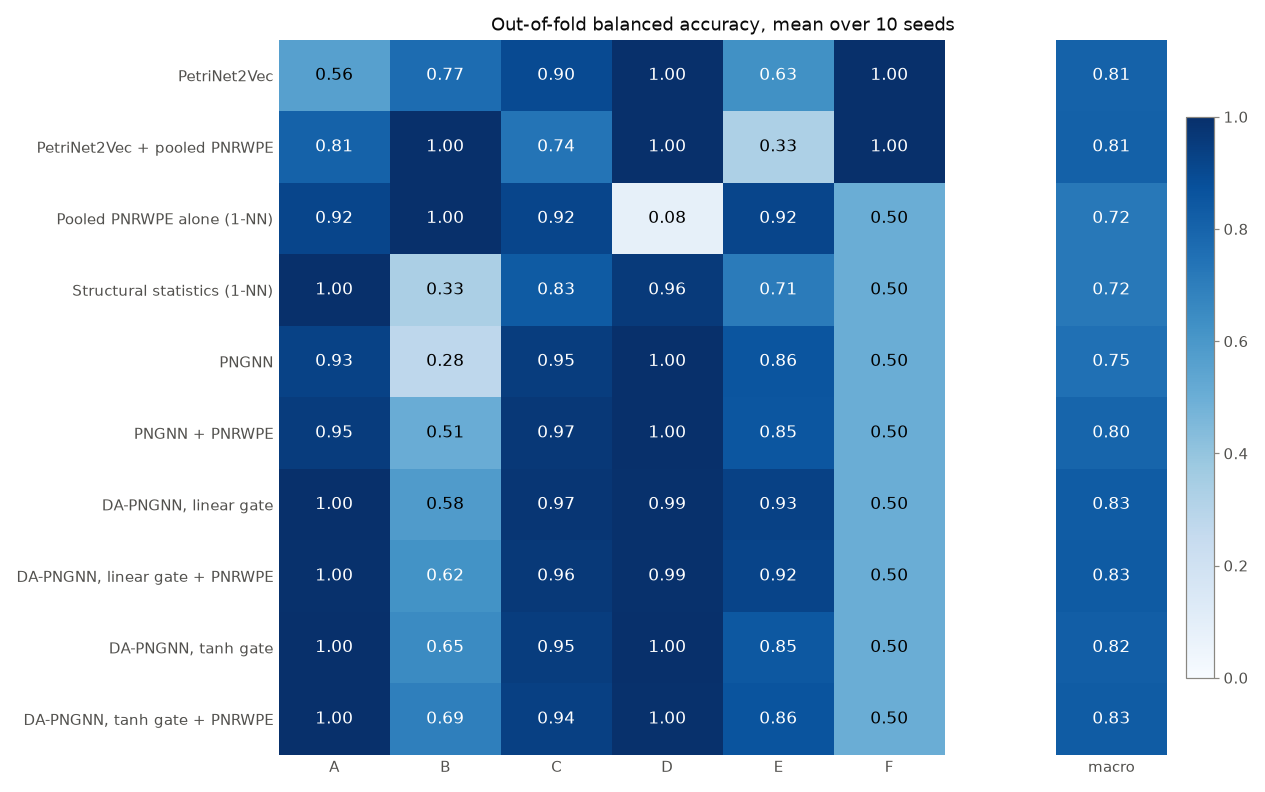}
    \caption{PDC: heatmap of the average BA for each factor. Best viewed in color.}
    \label{fig:pdc_factor_heatmap}
\end{figure}

\begin{figure}[htbp]
    \centering
    \includegraphics[width=0.8\textwidth]{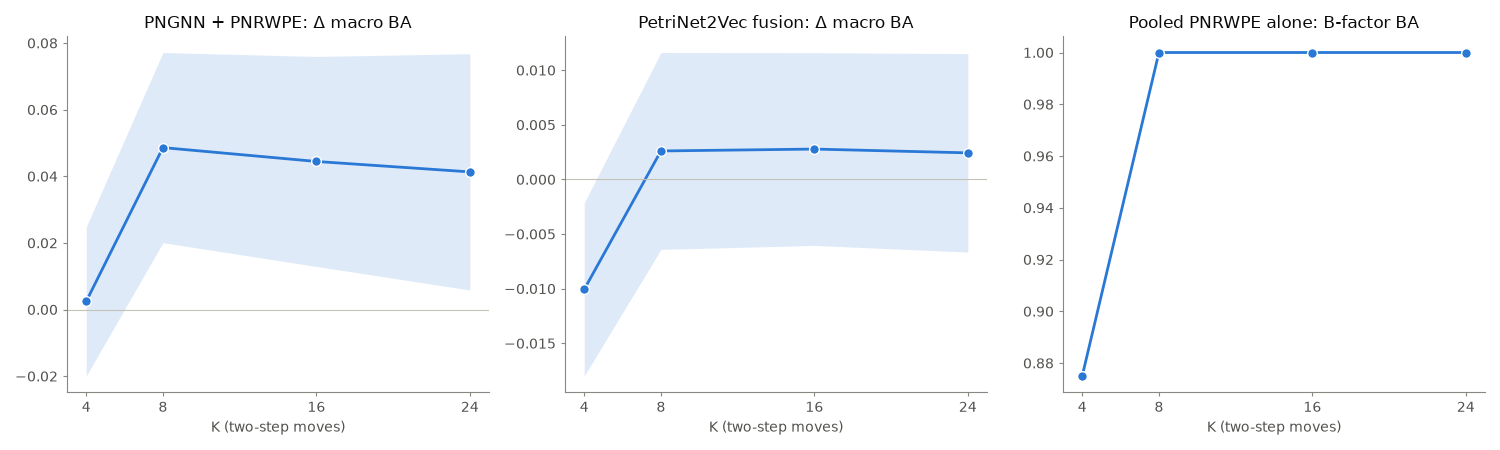}
    \caption{PDC: effect of the walk length $K$. Left: gain in macro BA of PNGNN + PNRWPE over PNGNN. Middle: the same gain for PetriNet2Vec fused with pooled PNRWPE. Right: BA on B of the pooled PNRWPE descriptor. Shaded bands show one standard deviation over ten seeds.}
    \label{fig:pdc_pnrwpe_horizon}
\end{figure}

\section{Depth of message passing}\label{app:depth}
Table~\ref{tab:depth} lists the values behind Figure~\ref{fig:depth} of Section~\ref{sec:results-pdc}: the plain PNGNN, DA-PNGNN with the linear gate, and DA-PNGNN with PNRWPE at one to five message-passing rounds, with everything else as in the main protocol.

\begin{table}[htbp]
\centering
\small
\setlength{\tabcolsep}{4pt}
\begin{tabular}{@{}lccc@{}}
\toprule
$L$ & PNGNN & DA-PNGNN & DA-PNGNN + PNRWPE\\
\midrule
\multicolumn{4}{@{}l}{\emph{Macro BA $\uparrow$}}\\
1 & $0.580\pm0.030$ & $0.787\pm0.028$ & $0.835\pm0.020$\\
2 & $0.698\pm0.026$ & $0.833\pm0.026$ & $0.860\pm0.016$\\
3 & $0.751\pm0.037$ & $0.830\pm0.011$ & $0.832\pm0.014$\\
4 & $0.794\pm0.035$ & $0.821\pm0.014$ & $0.826\pm0.009$\\
5 & $0.786\pm0.034$ & $0.814\pm0.015$ & $0.811\pm0.017$\\
\midrule
\multicolumn{4}{@{}l}{\emph{BA on B, loops $\uparrow$}}\\
1 & $0.260\pm0.042$ & $0.529\pm0.061$ & $0.738\pm0.052$\\
2 & $0.256\pm0.047$ & $0.602\pm0.070$ & $0.719\pm0.072$\\
3 & $0.275\pm0.071$ & $0.579\pm0.055$ & $0.619\pm0.045$\\
4 & $0.381\pm0.113$ & $0.540\pm0.043$ & $0.567\pm0.022$\\
5 & $0.387\pm0.105$ & $0.504\pm0.040$ & $0.492\pm0.060$\\
\bottomrule
\end{tabular}
\caption{PDC: macro BA (top) and BA on B (bottom), mean $\pm$ standard deviation over ten seeds, for $L=1$ to $5$ message-passing rounds. The rows at $L=3$ reproduce Table~\ref{tab:main}.}
\label{tab:depth}
\end{table}

\section{Computational cost}\label{app:cost}
Table~\ref{tab:cost} reports PDC training cost. The two gates differ sharply in memory, and the reason is structural rather than incidental: the linear gate can split its four kernel channels into matrix products and never forms a tensor over pairs and hidden channels, whereas the tanh gate must materialize that tensor. Since the PDC runs establish no accuracy advantage for tanh, we treat the linear gate as the default. On ProcRosetta, where eight runs share the GPU, an epoch of the runs in Section~\ref{sec:results-pr} takes 61 s with the PNGNN and 64 s with DA-PNGNN in the paper setting, and 68 against 69 s with loops. On the CPU in float64, PNRWPE ($K=16$) takes 0.15 ms per PDC net on average (maximum 0.21 ms) and 0.08 ms per ProcRosetta net (maximum 0.20 ms). The rescaled pair kernels take 0.16 ms per PDC net (maximum 0.21 ms) and 0.07 ms per ProcRosetta net (maximum 0.24 ms). These numbers should not be read as evidence that the method scales, since the nets here are small, which hides asymptotics. Figure~\ref{fig:cost} measures every variant alone on the GPU for one warm-up epoch followed by three timed epochs of training and validation, whereas Table~\ref{tab:cost} averages whole runs, whose first epochs carry start-up costs. On PDC the linear gate multiplies the epoch time of the plain encoder by 2.6 and its memory by 1.9, and the tanh gate by 10 and 17. On ProcRosetta the Petri net encoder is a small part of the joint model: the linear gate adds 4 to 5\% to an epoch and little memory, the tanh gate 11\% and 2.4 to 4.3 times the memory. PNRWPE adds a few milliseconds per epoch on PDC, at most 6\% on ProcRosetta, and no memory.

\begin{table}[h]
\centering
\small
\begin{tabular}{@{}lcccc@{}}
\toprule
Model & s / epoch & s / run & Epochs & Peak memory (MB)\\
\midrule
PNGNN & $0.027$ & $1.7$ & $63$ & 82\\
DA-PNGNN, linear & $0.079$ & $4.4$ & $55$ & 159\\
DA-PNGNN, tanh & $0.152$ & $8.0$ & $53$ & 1363\\
\bottomrule
\end{tabular}
\caption{PDC training cost on the GPU without node input: seconds per epoch and per run, and epochs until early stopping, averaged over the runs of the ten seeds (one per fold and seed). With PNRWPE the time per epoch changes by at most $0.007$ s and the peak memory by at most 1 MB.}
\label{tab:cost}
\end{table}

\begin{figure}[H]
    \centering
    \includegraphics[width=\textwidth]{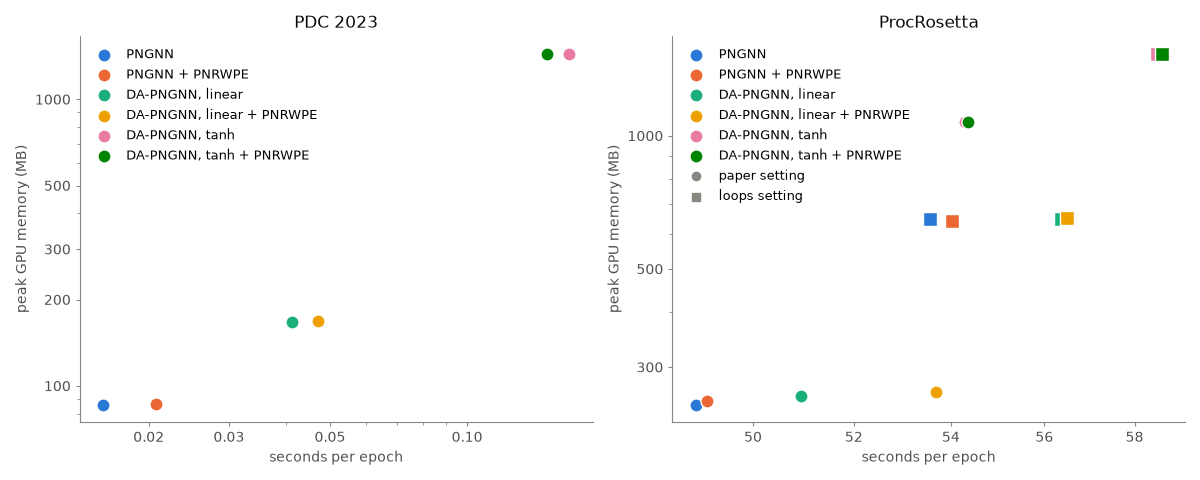}
    \caption{Seconds per epoch against peak GPU memory of every PNGNN variant, each trained alone on the GPU: one warm-up epoch, then the mean over three epochs, each a training and validation pass, on logarithmic axes. Left: PDC 2023. Right: ProcRosetta, where the joint multimodal model is timed, in the paper (circles) and loops (squares) settings.}
    \label{fig:cost}
\end{figure}

\clearpage
\section{Teleportation results on ProcRosetta}\label{app:teleport}
Table~\ref{tab:teleport-pr} lists the paired comparisons summarized in Section~\ref{sec:results-pr}, for both values of $\alpha$ in both ProcRosetta settings.

\begin{table}[H]
\centering
\footnotesize
\setlength{\tabcolsep}{4pt}
\begin{tabular}{@{}lcccccc@{}}
\toprule
Comparison (first $-$ second) & Exact (pp) & L1 & $\rho$ & N$\to$T (pp) & T$\to$N (pp) & Loss\\
\midrule
\multicolumn{7}{@{}l}{\emph{Paper setting, $\alpha=0.85$}}\\
Teleport $-$ no PE & $+0.25\,(6)$ & $-0.011\,(7)$ & $-0.015\,(1)$ & $+1.8\,(8)$ & $+0.6\,(4)$ & $-0.006\,(9)$\\
Teleport $-$ original $K=8$ & $+0.16\,(5)$ & $-0.006\,(7)$ & $-0.008\,(3)$ & $+0.7\,(7)$ & $+0.4\,(7)$ & $-0.004\,(9)$\\
Teleport $-$ shuffled & $+0.33\,(8)$ & $-0.010\,(8)$ & $+0.015\,(9)$ & $+2.9\,(10)$ & $+1.1\,(7)$ & $-0.007\,(10)$\\
Shuffled $-$ no PE & $-0.08\,(4)$ & $-0.002\,(7)$ & $-0.030\,(0)$ & $-1.1\,(3)$ & $-0.5\,(3)$ & $+0.002\,(5)$\\
DA + teleport $-$ DA & $-0.18\,(2)$ & $-0.007\,(8)$ & $-0.002\,(5)$ & $-0.5\,(4)$ & $+0.7\,(5)$ & $0.000\,(6)$\\
\multicolumn{7}{@{}l}{\emph{Paper setting, $\alpha=0.95$}}\\
Teleport $-$ no PE & $+0.10\,(5)$ & $-0.009\,(8)$ & $-0.016\,(3)$ & $+1.2\,(7)$ & $+0.4\,(3)$ & $-0.005\,(8)$\\
Teleport $-$ original $K=8$ & $0.00\,(4)$ & $-0.004\,(5)$ & $-0.009\,(3)$ & $+0.1\,(7)$ & $+0.1\,(6)$ & $-0.003\,(6)$\\
Teleport $-$ shuffled & $+0.06\,(6)$ & $-0.006\,(8)$ & $+0.020\,(9)$ & $+1.6\,(9)$ & $+1.4\,(9)$ & $-0.004\,(9)$\\
Shuffled $-$ no PE & $+0.04\,(5)$ & $-0.003\,(7)$ & $-0.036\,(0)$ & $-0.4\,(4)$ & $-1.0\,(2)$ & $-0.001\,(6)$\\
DA + teleport $-$ DA & $-0.20\,(3)$ & $+0.001\,(6)$ & $+0.002\,(5)$ & $0.0\,(4)$ & $+0.5\,(6)$ & $+0.002\,(2)$\\
$\alpha=0.95$ $-$ $\alpha=0.85$ & $-0.16\,(2)$ & $+0.003\,(3)$ & $-0.001\,(5)$ & $-0.6\,(4)$ & $-0.3\,(3)$ & $+0.001\,(4)$\\
\midrule
\multicolumn{7}{@{}l}{\emph{Loops setting, $\alpha=0.85$}}\\
Teleport $-$ no PE & $+0.31\,(7)$ & $-0.004\,(6)$ & $-0.015\,(1)$ & $+0.8\,(7)$ & $-0.2\,(3)$ & $0.000\,(4)$\\
Teleport $-$ original $K=8$ & $+0.51\,(7)$ & $-0.002\,(7)$ & $-0.023\,(0)$ & $0.0\,(4)$ & $0.0\,(4)$ & $+0.003\,(3)$\\
Teleport $-$ shuffled & $-0.20\,(4)$ & $-0.003\,(7)$ & $-0.001\,(4)$ & $+0.8\,(6)$ & $+0.7\,(7)$ & $-0.003\,(7)$\\
Shuffled $-$ no PE & $+0.51\,(7)$ & $-0.002\,(6)$ & $-0.014\,(0)$ & $0.0\,(5)$ & $-0.9\,(3)$ & $+0.003\,(4)$\\
DA + teleport $-$ DA & $+0.12\,(4)$ & $-0.001\,(6)$ & $+0.004\,(7)$ & $-0.1\,(4)$ & $+0.3\,(7)$ & $-0.002\,(6)$\\
\multicolumn{7}{@{}l}{\emph{Loops setting, $\alpha=0.95$}}\\
Teleport $-$ no PE & $+0.04\,(4)$ & $0.000\,(4)$ & $-0.010\,(2)$ & $+1.1\,(7)$ & $+0.1\,(6)$ & $0.000\,(5)$\\
Teleport $-$ original $K=8$ & $+0.23\,(6)$ & $+0.002\,(4)$ & $-0.018\,(0)$ & $+0.3\,(5)$ & $+0.3\,(6)$ & $+0.003\,(2)$\\
Teleport $-$ shuffled & $-0.25\,(2)$ & $-0.005\,(5)$ & $+0.006\,(7)$ & $+1.7\,(9)$ & $+1.6\,(8)$ & $-0.003\,(6)$\\
Shuffled $-$ no PE & $+0.29\,(7)$ & $+0.004\,(4)$ & $-0.016\,(0)$ & $-0.6\,(4)$ & $-1.5\,(1)$ & $+0.003\,(5)$\\
DA + teleport $-$ DA & $-0.25\,(2)$ & $-0.003\,(8)$ & $+0.006\,(7)$ & $-0.3\,(2)$ & $0.0\,(5)$ & $+0.001\,(2)$\\
$\alpha=0.95$ $-$ $\alpha=0.85$ & $-0.27\,(3)$ & $+0.004\,(4)$ & $+0.005\,(5)$ & $+0.3\,(5)$ & $+0.3\,(7)$ & $0.000\,(4)$\\
\bottomrule
\end{tabular}
\caption{ProcRosetta, teleportation PNRWPE at $K=8$: mean paired differences over seeds 13 to 22, with the number of seeds in which the first arm is better given in parentheses (lower is better for behavior L1 and the net-to-tree loss). The exact-tree rate and top-1 retrieval (N$\to$T: net$\to$tree, T$\to$N: tree$\to$net) differ in percentage points. \emph{Shuffled}: teleportation PNRWPE permuted across all nodes of each net. \emph{DA}: DA-PNGNN with the linear gate. The final row of each setting isolates the effect of slower mixing, and finds no consistent effect.}
\label{tab:teleport-pr}
\end{table}

\end{document}